\newcommand{\cmark}{\ding{51}}%
\newcommand{\xmark}{\ding{55}}%
 \newtheorem{proposition}{Proposition}
  \newtheorem{corollary}{Corollary}
  \newtheorem{lemma}{Lemma}
 \newtheorem{definition}{Definition}
  \newtheorem{remark}{Remark}
  \theoremstyle{definition}
  \newtheorem{example}{Example}
\begin{document}
\title{\bf Boundary properties of the inconsistency of pairwise comparisons in group decisions}
\author{
{\bf Matteo Brunelli}
\\
{\normalsize  Department of Mathematics and Systems Analysis} \\
{\normalsize Aalto University}, {\normalsize P.O. box 11100,
00076 Aalto, Finland}
\\ {\normalsize e-mail:
\texttt{matteo.brunelli@aalto.fi}}
\vspace{0.3cm}\\
{\bf Michele Fedrizzi}
\\
{\normalsize Department of Industrial Engineering} \\
{\normalsize University of Trento}, {\normalsize Via Sommarive 9, I-38123 Trento, Italy}
\\ {\normalsize e-mail:
\texttt{michele.fedrizzi@unitn.it}}
}
\date{\today}

\maketitle \thispagestyle{empty}

%\tableofcontents
% ---------------------------------------------------------------------

\begin{center}
{\bf Abstract}
\end{center}

{\small \noindent This paper proposes an analysis of the effects of consensus and preference aggregation on the consistency of pairwise comparisons. We define some boundary properties for the inconsistency of group preferences and investigate their relation with different inconsistency indices. Some results are presented on more general dependencies between properties of inconsistency indices and the satisfaction of boundary properties. In the end, given three boundary properties and nine indices among the most relevant ones, we will be able to present a complete analysis of what indices satisfy what properties and offer a reflection on the interpretation of the inconsistency of group preferences.}

 \vspace{0.3cm}
 \noindent {\small {\bf
 Keywords}: Pairwise comparison matrix, inconsistency indices, boundary properties, group decision making, analytic hierarchy process.}
 \vspace{0.3cm}

% ---------------------------------------------------------------------

\section{Introduction}

In a wide range of decision making problems, it occurs that an expert, or a group of experts, is asked to rate some alternatives. Selecting the best alternative is trivial when the number of considered alternatives is very small, but complexity arises as the number of alternatives, and criteria with respect to which alternatives are judged, grows. Techniques based on pairwise comparisons allow the expert to discriminate between two alternatives at a time, thus decomposing the problem into more simple and easily tractable sub-problems. \\The Analytic Hierarchy Process (AHP) by \citet{Saaty1977,Saaty1980} is probably the best known among all the methods using pairwise comparisons. In a recent survey by \citet{IshizakaLabib2011} on the latest developments of the AHP, consistency of preferences and group decisions have been considered hot topics, and the possibility of estimating inconsistency been regarded as a valuable asset for techniques adopting pairwise comparison matrices.

Consistency has been widely regarded as a desirable, yet hardly ever achievable, property of preferences in decision making problems. Following the thesis of \citet{Irwin1958}, which links the concepts of preference and discrimination, being consistent in expressing preferences means being rational in discriminating between alternatives. Although one might argue that consistency does not necessarily imply expertise of the decision maker (consistent preferences could possibly be obtained randomly), it is undebatable that a good expert should always be able to state his preferences in a non-contradictory way. Hence, although consistency alone does not guarantee the expertise of a decision maker, the existence of inconsistencies should be symptomatic of the decision maker's scarce preparation or lack of knowledge of the problem. Going back in time, even in the fundamental contribution by \citet{Savage1972} consistency of preferences was regarded as a \textit{desideratum}. More recently, \citet{Gass2005} recalled that also \citet{LuceRaiffa1957} and \citet{Fishburn1991} regarded transitivity of preferences, and consequently their consistency, as an auspicable, but not necessary, condition for the preferences of a decision maker.

%It is therefore evident that inconsistency detection remains a crucial point in decision theory. In the early nineties, by referring to the seminal paper by Thurstone \cite{Thurstone1927}, Koczkodaj \cite{Koczkodaj1993} compared the importance of the introduction of pairwise comparison matrices for the development of decision sciences with the introduction of derivatives for the development of calculus. Furthermore, on this ground, he also attributed the scarce attention paid to pairwise comparisons to the way inconsistency was estimated.

Note that the use of the concept of consistency has not been limited to the quantification of inconsistency. For example, just to cite the most recent results, it has been used to improve consistency of preferences in the framework of a model based on Hadamard product between matrices \citep{KouErguShang2014}, to detect the most inconsistent comparisons \citep{ErguEtAl2011}, and to derive the priority vector by means of a geometric similarity measure \citep{KouLin2014}. On the other hand, also the concept of group decisions with pairwise comparisons has been studied; for example \citet{AltuzarraEtAl2010} proposed a statistical model for consensus in the AHP including assumptions on group consistency, and \citet{BernasconiEtAl2014} studied, from the empirical point of view, different aggregation methods for preferences.

Consistency has been a widely studied research topic in decision sciences, but, in spite of the growing effort in studying group decisions, most of the research on consistency of preference relations has focused on the reliable assessment of the degree of inconsistency of single pairwise comparison matrices. Only few, and more recent, studies \citep{EscobarEtAl2004,Groselj,LinEtAl2008,LiuEtAl2012} have tried to extend the issue of inconsistency quantification to the case with multiple decision makers by examining single measures of inconsistency but never proposing a more general reasoning on the matter.
Furthermore, very few studies have investigated the connection between consensus and consistency. In a qualitative study, \citet{WeissShanteau2004} highlighted how consensus alone does not necessarily lead to better decisions and, instead, emphasized the fundamental role of the expertise of decision makers, which they called consistency.

In this paper we will provide further results on the connection between group decisions and inconsistency, in particular on how the former affects the latter. More specifically we shall define some general boundary properties for the inconsistency of a group of decision makers and see whether different inconsistency indices satisfy them or not. In doing so, we shall be able to derive and use some more general results starting from some axiomatic properties of inconsistency indices \citep{BrunelliFedrizziJORS}.

%Citare \cite{BernasconiEJOR}

The paper is outlined as follows.  In Section \ref{sec:pairwise} we
recall the definitions of pairwise comparison matrices and
inconsistency, and we summarize the axioms which were proposed to
characterize inconsistency indices. In Section \ref{sec:boundaries}
we define the boundary properties and, within the same section, in
Subsections \ref{sub:lower} and \ref{sub:upper} we study the
satisfaction of lower and upper boundary properties. In Section
\ref{sec:discussion} we discuss the implications of the results, and
in Section \ref{sec:conclusions} we draw the conclusions.

\section{Pairwise comparison matrices and inconsistency indices}
\label{sec:pairwise}
The intensity of pairwise preferences of a decision maker can be represented on bipolar scales. The approach proposed by \citet{Fishburn1991} based on skew symmetric additive preferences considers the opinions of a decision maker to be expressed on the real line with the value $0$ representing indifference between two alternatives. Conversely, \citet{LuceSuppes1965} and part of the fuzzy sets community \citep{DeBaetsEtAl2006} studied judgments expressed on the scale $]0,1[$ with the indifference represented by the value $0.5$. Hereafter, due to its popularity, we shall instead consider the approach offered by \citet{Saaty1977,Saaty1980}, where pairwise judgments are expressed as entries of positive reciprocal matrices, often called pairwise comparison matrices. Even so, our conclusions should not lose in generality as it was proven that all the above mentioned approaches are group isomorphic to each other \citep{CavalloD'Apuzzo2009}.
Given a set of $n$ alternatives, a \emph{pairwise comparison matrix} is a positive square matrix $\mathbf{A}=(a_{ij})_{n \times n}$ such that $a_{ij}a_{ji}=1 \, \forall i,j$, where $a_{ij}$ is the subjective assessment of the relative importance of the $i$-th alternative with respect to the $j$-th. For instance, $a_{ij}=2$ means that, for the decision maker, the $i$-th alternative is two times better than the $j$-th.
A pairwise comparison matrix is \emph{consistent} if and only if
\begin{equation}
\label{eq:consistency}
a_{ik}=a_{ij}a_{jk}~~~\forall i,j,k.
\end{equation}
Furthermore, if and only if a pairwise comparison matrix $\mathbf{A}$ is consistent, there exists a vector $\mathbf{w}=(w_{1},\ldots,w_{n})$ such that
\[
a_{ij}=\frac{w_{i}}{w_{j}}~~~\forall i,j.
\]
A possible way of finding vector $\mathbf{w}$ from a consistent pairwise comparison matrix is the geometric mean method
\[
w_{i}=\left( \prod_{j=1}^{n}a_{ij} \right)^{\frac{1}{n}}.
\]
The same method is also commonly used to estimate reliable vectors from inconsistent pairwise comparison matrices.\\
For notational convenience, we define the set of all pairwise comparison matrices as
\[
\mathcal{A}= \left\{ \mathbf{A}=(a_{ij})_{n \times n} | a_{ij}>0, a_{ij}a_{ji}=1 ~\forall i,j, ~ n>2 \right\}.
\]
Similarly, the set of all \emph{consistent} pairwise comparison matrices $\mathcal{A}^{*} \subset  \mathcal{A}$ is defined as
\[
\mathcal{A}^{*}= \{ \mathbf{A}=(a_{ij})_{n \times n} | \mathbf{A} \in \mathcal{A}, a_{ik} = a_{ij}a_{jk} ~ \forall i,j,k \}
\]

A very precise definition of consistency was given in
(\ref{eq:consistency}), but in the literature there is not a
meeting of minds on how inconsistency should be quantified. In fact,
inconsistency is generally regarded as a lack of consistency, i.e. a
deviation from (\ref{eq:consistency}), but there is not a unique
formula to quantify it. To overcome this problem, inconsistency
indices have been introduced. Inconsistency indices are functions
\begin{equation}
\label{eq:inconsistency_index}
I:\mathcal{A} \rightarrow \mathbb{R} \, ,
\end{equation}
where the value $I(\mathbf{A})$ is an estimation of the degree of
inconsistency of the  pairwise comparison matrix $\mathbf{A}$. It is
important to note that each inconsistency index is in fact a
different \emph{definition} of inconsistency measuring. Establishing
if a function returns a reasonable estimation of
inconsistency---i.e. if a function is a good definition of
inconsistency---is a crucial point indeed. In fact, there exist
infinitely many functions (\ref{eq:inconsistency_index}) whose
behavior is obviously meaningless when it comes to estimate the
degree of inconsistency of a pairwise comparison matrix. This is the
reason which motivated the introduction of some minimal reasonable
properties that any inconsistency index should satisfy
\citep{BrunelliFedrizziJORS}. The five axiomatic properties are
summarized and justified in the following and will later be used to
derive some results. We refer to the original paper
\citep{BrunelliFedrizziJORS} for more detailed descriptions and
comments.
\begin{description}
    \item[A1:] There exists a unique $\nu \in \mathbb{R}$ representing the situation of full consistency, i.e.
    \[
     \label{P1}
\exists ! \nu \in \mathbb{R} \text{ such that } I(\mathbf{A})= \nu \Leftrightarrow \mathbf{A} \in \mathcal{A}^{*}.
    \]
    Hence, every inconsistency index should at least be able to distinguish between fully consistent and inconsistent matrices.
    \item[A2:] Changing the order of the alternatives does not affect the inconsistency of preferences. That is,
    \begin{equation*}
\label{eq:permutation}
I(\mathbf{P}\mathbf{A}\mathbf{P}^{T})=I(\mathbf{A})
\end{equation*}
for any permutation matrix $\mathbf{P}$. Thus, inconsistency remains
unchanged when the names of alternatives are exchanged.

\item[A3:] If preferences of a
matrix $\mathbf{A}$ are properly intensified obtaining a new matrix
which we denote by $\mathbf{A}(b)$, then the inconsistency of
$\mathbf{A}(b)$ cannot be smaller than the inconsistency of
$\mathbf{A}$. In fact, if all the expressed preferences indicate
indifference between alternatives, it is $a_{ij}=1 \forall i, j$,
and $\mathbf{A}$ is consistent. Going farther from this uniformity
means having sharper and stronger judgments, and this should not
make their possible inconsistency less evident. In other words,
intensifying the preferences (pushing them away from indifference)
should not de-emphasize the characteristics of these preferences and
their possible contradictions. More formally, it was proved by \citet{Saaty1977} that the exponential is the only non-trivial function preserving reciprocity and consistency (when $\mathbf{A}$ is consistent). Thus, the intensification of preferences is obtained defining $\mathbf{A}(b)=\left( a_{ij}^{b} \right)_{n
\times n}$ with $b>1$. Then, the property is as follows
\[
I(\mathbf{A}(b)) \geq I(\mathbf{A}) ~~~~\forall \mathbf{A} \in \mathcal{A}, ~~b > 1 .
\]

\item[A4:] Given  a \emph{consistent}
pairwise comparison matrix and considering a single arbitrary
comparison between two alternatives, then as we push its value far
from its original one, we clearly increase the distance from
consistency. Axiom 4 requires that the inconsistency of the matrix
should not decrease. More formally, given a consistent matrix
$\mathbf{A} \in \mathcal{A}^{*}$, and considering any arbitrary
non-diagonal element $a_{pq}$ (and its reciprocal $a_{qp}$) such
that $a_{pq} \neq 1$, let $\mathbf{A}(\delta)$ be the inconsistent
matrix obtained from \textbf{A} by replacing the entry $a_{pq}$ with
$a_{pq}^{\delta}$, where $\delta \neq 1$. Necessarily, $a_{qp}$ must
be replaced by $a_{qp}^{\delta}$ in order to preserve reciprocity.
Let $\mathbf{A}(\delta')$ be the inconsistent matrix obtained from
\textbf{A} by replacing entries $a_{pq}$ and $a_{qp}$ with
$a_{pq}^{\delta'}$ and $a_{qp}^{\delta'}$ respectively. The property
can then be formulated for all $\mathbf{A} \in \mathcal{A}^{*}$ as
\begin{equation}
\label{monotonicity}
\begin{split}
 \delta' > \delta > 1 & \Rightarrow I(\mathbf{A}(\delta')) \geq I(\mathbf{A}(\delta)) \\
 \delta' < \delta < 1 & \Rightarrow I(\mathbf{A}(\delta')) \geq I(\mathbf{A}(\delta)) .
\end{split}
\end{equation}

\item[A5:] Function $I$ is continuous with respect to the entries of $\mathbf{A}$.
This is required, as infinitesimally small variations of the
preferences  should cause infinitesimally small changes of the value
of the inconsistency.
\end{description}

We remark that all the indices described in this paper measure the
\emph{inconsistency} of the preferences. Nevertheless, in order to
avoid multiple labels, we maintain the original names for the most
popular indices, e.g. `Consistency Index' and `Geometric Consistency
Index'.

\section{The effects of consensus on the consistency of preferences}
\label{sec:boundaries}

In the typical group decision framework, there are $m~(m \geq 2)$ decision makers, each associated with a set of preferences. Hence, there exists $m$ pairwise comparison matrices in the form $\mathbf{A}_{1}=(a_{ij}^{(1)})_{n \times n},\ldots,\mathbf{A}_{m}=(a_{ij}^{(m)})_{n \times n}$. %Note that the case of the single decision maker can therefore be regarded as a special case of group decision making.

In our opinion, it would be interesting to study if, and how, the inconsistency of the pairwise comparisons changes under some circumstances. When studying the connection between consensus and inconsistency, some natural questions could be the following:
\begin{itemize}
    \item How does the inconsistency of the preferences of the single decision makers react when they negotiate and their preferences converge to a consensual solution?
    \item Is the group inconsistency of the aggregated preferences a weighted mean of the inconsistencies of the original preference relations or \emph{systematically} higher/lower?
\end{itemize}

In some recent papers, the problem of computing an upper bound for group inconsistency has been addressed only for a couple of inconsistency indices. In particular, \citet{Xu2000} allegedly proved that Saaty's Consistency Index $CI$ of a combination of pairwise comparison matrices cannot be greater that the maximum $CI$ of the single pairwise comparison matrices. However, \citet{LinEtAl2008} showed that the proof was not satisfactory and that Xu's result was a conjecture. Finally, \citet{LiuEtAl2012} provided a proof showing that Xu's conjecture was, in effect, true. \citet{Groselj} noted that the whole controversy was based on the unawareness that a more general problem had already been solved before by \citet{ElsnerEtAl1988}. Some studies have been proposed for another inconsistency index; \citet{EscobarEtAl2004} investigated the upper boundary of the Geometric Consistency Index, $GCI$.

The question of how the pairwise comparison matrices of different decision makers should be aggregated was answered when \citet{AczelSaaty1983} proved that the weighted geometric mean is the only reasonable function to do so. Hence, when it comes to synthesize $m$ pairwise comparison matrices into a single one $\mathbf{A}^{\star}=(a_{ij}^{\star})_{n \times n}$, then its entries should be obtained as the weighted geometric means of the corresponding entries of the decision makers' pairwise comparison matrices, e.g.
\begin{equation}
\label{eq:aggregation}
a_{ij}^{\star}= \prod_{h=1}^{m}  {a_{ij}^{(h)}}^{\lambda_{h}}
\end{equation}
where $\boldsymbol{\lambda}=(\lambda_{1},\ldots,\lambda_{m})$ such that $\lambda_{h} \geq 0~\forall h$ and $\sum_{h=1}^{m}\lambda_{h}=1$ is the weight vector of relative importances of the decision makers. We use $\mathcal{L}_{m}$ to denote the set of all the weight vectors with $m$ components. That is,
\begin{equation}
\label{eq:L_m}
\mathcal{L}_{m}=\left\{ (\lambda_{1},\ldots,\lambda_{m}) \bigg| \lambda_{h} \geq 0~\forall h,~~\sum_{h=1}^{m}\lambda_{h}=1 \right\}.
\end{equation}
Let us then define some properties which will play a pivotal role in the rest of the paper.
\begin{definition}[Boundary properties]
\label{def:boundary}
Let $\mathbf{A}^{\star}$ be the aggregated pairwise comparison matrix as in (\ref{eq:aggregation}). A function $I:\mathcal{A} \rightarrow \mathbb{R}$ is \emph{lower bounded} (w.r.t the geometric mean) if:
\begin{equation}
\label{eq:lower_bounded}
I \left( \mathbf{A}^{\star} \right) \geq \min \left\{ I \left( \mathbf{A}_{1} \right),\ldots,I \left( \mathbf{A}_{m} \right) \right\}~~\forall \mathbf{A}_{1},\ldots,\mathbf{A}_{m} \in \mathcal{A},~~\boldsymbol{\lambda} \in \mathcal{L}_{m}
\end{equation}
and \emph{upper bounded} (w.r.t. the geometric mean) if
\begin{equation}
\label{eq:upper_bounded}
I \left( \mathbf{A}^{\star} \right) \leq \max \left\{ I \left( \mathbf{A}_{1} \right),\ldots,I \left( \mathbf{A}_{m} \right) \right\} ~~\forall \mathbf{A}_{1},\ldots,\mathbf{A}_{m} \in \mathcal{A},~~\boldsymbol{\lambda} \in \mathcal{L}_{m}
\end{equation}
A function $I:\mathcal{A} \rightarrow \mathbb{R}$ is \emph{strongly upper bounded} (w.r.t the geometric mean) if and only if
\begin{equation}
\label{eq:strongly_upper_bounded}
I(\mathbf{A}^{\star}) \leq  \sum_{h=1}^{m} \lambda_{h} I(\mathbf{A}_{h}) ~~~~\forall \mathbf{A}_{1},\ldots,\mathbf{A}_{m} \in \mathcal{A},~~\boldsymbol{\lambda} \in \mathcal{L}_{m}
\end{equation}
\end{definition}
Note that, even if not specified, the previous definition is for all $n,m\geq 2$. Furthermore, Figure \ref{fig:boundary_properties} provides a graphical interpretation of these properties in the case of two pairwise comparison matrices.

\begin{figure}[h]
         \centering
         \begin{subfigure}[b]{0.34\textwidth}
                 \includegraphics[width=\textwidth]{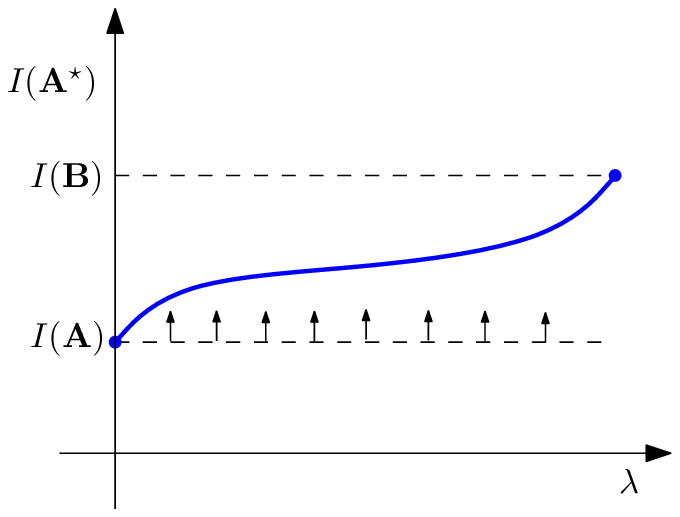}
                 \caption{Lower boundary property}
                 \label{fig:lower}
         \end{subfigure}% 
				\hspace{1cm}
         ~ %add desired spacing between images, e. g. ~, \quad, \qquad etc.
           %(or a blank line to force the subfigure onto a new line)
         \begin{subfigure}[b]{0.34\textwidth}
                 \includegraphics[width=\textwidth]{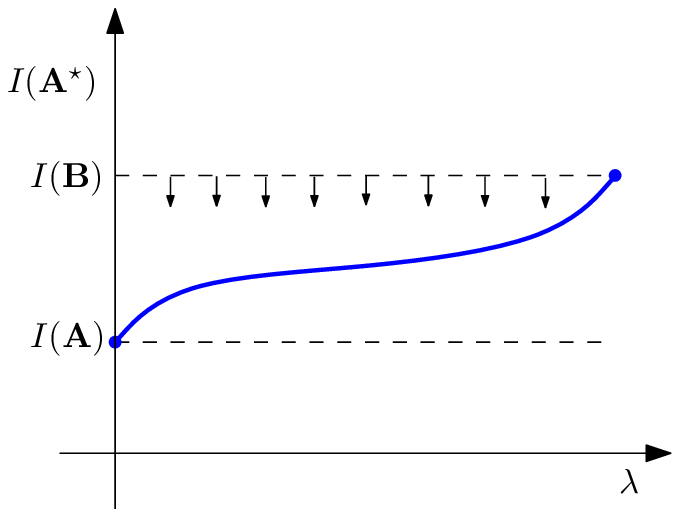}
                 \caption{Upper boundary property}
                 \label{fig:upper}
         \end{subfigure}\\
         ~ %add desired spacing between images, e. g. ~, \quad, \qquad etc.
           %(or a blank line to force the subfigure onto a new line)
         \begin{subfigure}[b]{0.34\textwidth}
                 \includegraphics[width=\textwidth]{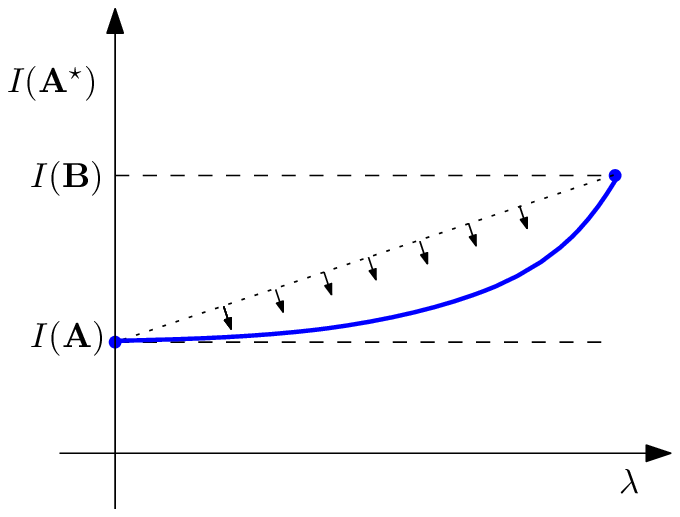}
                 \caption{Strong upper boundary property}
                 \label{fig:strong_upper}
         \end{subfigure}
         \caption{Interpretations of boundary properties in the case of two decision makers with pairwise comparison matrices $\mathbf{A}$ and $\mathbf{B}$. We use the convention $\lambda_{1}=\lambda$ and $\lambda_{2}=1-\lambda$.}\label{fig:boundary_properties}
\end{figure}

\begin{remark}
Given a set of $m$ matrices $\mathbf{A}_{1},\ldots,\mathbf{A}_{m}$, if a function $I$ is strongly upper bounded, then it is also convex w.r.t. the variables $(\lambda_{1},\ldots,\lambda_{m})\in \mathcal{L}_{m}$. In fact, if a function $I$ is \emph{not} convex, then, there exists a subset $\{\mathbf{A}_{1},\ldots,\mathbf{A}_{s} \} \subset \mathcal{A}$ and a vector $(\lambda_{1}^{'},\ldots,\lambda_{s}^{'})\in \mathcal{L}_{s}$ such that $I(\mathbf{A}^{\star})>\sum_{i=1}^{s} \lambda_{i} I(\mathbf{A}_{i})$, which contradicts the definition of strongly upper bounded inconsistency index.
\end{remark}
We believe that it is important to study the phenomenon of inconsistency in the wider context of group decisions and raise the level of the discussion from single indices to their properties A1--A5. In fact, at this point some questions may arise concerning the connection between the axiomatic properties A1--A5, here recalled in Section \ref{sec:pairwise}, and the boundary properties of Definition \ref{def:boundary}. Namely, before trying to prove whether or not each single inconsistency index satisfies the boundary properties, we shall use the axiomatic framework A1--A5 to derive more general results. For instance, one type of more general result would be that, if some axiomatic properties hold for one index $I$, then it is guaranteed that also some of the boundary conditions in Definition \ref{def:boundary} must hold/not hold for $I$.
%Hence, besides proposing some results on specific indices, the next sections shall remark some connections between the five axiomatic properties and the three boundary properties of Definition \ref{def:boundary}.

\subsection{Lower boundary of inconsistency indices}
\label{sub:lower}

If an inconsistency index is lower bounded, then it is impossible to achieve a group inconsistency lower than the lowest inconsistency of all the decision makers, by aggregating their preferences. With the following proposition, we shall show that an inconsistency index satisfying the simple axiom A1 cannot be lower bounded. Perhaps due to the previous lack of an axiomatic framework which allowed the derivation of general results, to our best knowledge, this result has never been explicitly spelled out.

\begin{proposition}
\label{prop:1}
If an inconsistency index satisfies A1, then it cannot be lower bounded.
\end{proposition}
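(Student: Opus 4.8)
The plan is to show that the defining inequality (\ref{eq:lower_bounded}) can be violated: for an \emph{arbitrary} index $I$ satisfying A1 I would exhibit a single two-expert instance ($m=2$) in which the aggregated matrix $\mathbf{A}^{\star}$ is consistent although neither individual matrix is. Once $\mathbf{A}^{\star}\in\mathcal{A}^{*}$, axiom A1 forces $I(\mathbf{A}^{\star})=\nu$, whereas the two individual (inconsistent) matrices must score strictly above $\nu$; hence $I(\mathbf{A}^{\star})$ falls strictly below $\min\{I(\mathbf{A}_{1}),I(\mathbf{A}_{2})\}$, contradicting lower-boundedness.

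For the construction I would fix any inconsistent $\mathbf{A}\in\mathcal{A}\setminus\mathcal{A}^{*}$ (such matrices exist because $n>2$) and set $\mathbf{A}_{1}=\mathbf{A}$ and $\mathbf{A}_{2}=\mathbf{A}^{T}$. Reciprocity $a_{ij}a_{ji}=1$ makes $\mathbf{A}^{T}$ the entrywise reciprocal of $\mathbf{A}$, and since $x\mapsto x^{-1}$ preserves the consistency condition (\ref{eq:consistency}), the matrix $\mathbf{A}^{T}$ is consistent if and only if $\mathbf{A}$ is; thus $\mathbf{A}_{2}\notin\mathcal{A}^{*}$ as well. Taking $\lambda_{1}=\lambda_{2}=1/2$ in (\ref{eq:aggregation}) gives
\[
a_{ij}^{\star}=\bigl(a_{ij}^{(1)}\bigr)^{1/2}\bigl(a_{ij}^{(2)}\bigr)^{1/2}=\bigl(a_{ij}\,a_{ji}\bigr)^{1/2}=1\qquad\forall i,j,
\]
so $\mathbf{A}^{\star}$ is the all-ones matrix, which trivially satisfies (\ref{eq:consistency}); hence $I(\mathbf{A}^{\star})=\nu$ by A1.

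It then remains to locate $I(\mathbf{A}_{1})$ and $I(\mathbf{A}_{2})$. By A1 again, neither equals $\nu$, since both matrices are inconsistent; and because $\nu$ is the value representing full consistency — the baseline, hence the smallest value the index attains — this gives $I(\mathbf{A}_{1})>\nu$ and $I(\mathbf{A}_{2})>\nu$. Consequently $I(\mathbf{A}^{\star})=\nu<\min\{I(\mathbf{A}_{1}),I(\mathbf{A}_{2})\}$, and a single such instance is enough to conclude that $I$ is not lower bounded.

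The step I expect to need the most care is the passage "$I(\mathbf{A}_{h})\neq\nu\Rightarrow I(\mathbf{A}_{h})>\nu$": the purely logical content of A1 yields only $\neq$, and the strict inequality above $\nu$ relies on the orientation of inconsistency indices (a larger value meaning more inconsistency), which is precisely what makes $\nu$ a minimum rather than merely a distinguished value; this holds for every index examined in the paper, so the step is legitimate here. The remaining ingredients — that $\mathbf{A}^{T}$ stays inconsistent and that the $\lambda=1/2$ geometric aggregate collapses to the all-ones matrix — are routine computations.
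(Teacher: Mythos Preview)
Your proposal is correct and follows essentially the same approach as the paper: take an inconsistent matrix $\mathbf{A}$ and its transpose $\mathbf{A}^{T}$, aggregate with equal weights to obtain the all-ones (consistent) matrix, and invoke A1. You are in fact more explicit than the paper on two points --- why $\mathbf{A}^{T}$ is inconsistent, and that A1 alone yields only $I(\mathbf{A}_{h})\neq\nu$ while the strict inequality $>\nu$ rests on the orientation convention for inconsistency indices --- the latter being a subtlety the paper's proof glosses over by writing ``$>0$'' directly.
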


\begin{proof}
It is sufficient to consider a matrix $\mathbf{A} \notin
\mathcal{A^{*}}$ and its  transpose $\mathbf{A}^{T}$. Since they are
both inconsistent, A1 implies $I(\mathbf{A})>0$ and
$I(\mathbf{A}^{T})>0$, but the new matrix obtained as their
element-wise geometric mean $\left( \sqrt{a_{ij}a_{ji}} \right)_{n
\times n}$ is always consistent. As an inconsistency index respects
A1, then the result that $I \left( \left( \sqrt{a_{ij}a_{ji}}
\right)_{n \times n} \right) = 0 < \min \{
I(\mathbf{A}),I(\mathbf{A}^{T}) \}~\forall \mathbf{A} \notin
\mathcal{A}^{*}$ concludes the proof.
\end{proof}

Let us remark the importance of this result: for any inconsistency index satisfying the extremely weak axiom A1, it is possible that, given some pairwise comparison matrices, the inconsistency index calculated on their geometric mean-based combination be smaller than the minimum of their inconsistencies. This suggests that negotiation and convergence towards consensus might have a good effect on consistency of preferences.
Furthermore, it can happen that two decision makers, initially completely inconsistent could, by negotiating, converge towards a consensual and fully consistent solution. Consider, for instance, the case
\begin{equation}
\label{eq:two_extreme}
\mathbf{A}_{1}=
\begin{pmatrix}
1 & 1/9 & 9 \\
9 & 1   & 1/9 \\
1/9 & 9   & 1
\end{pmatrix}
\hspace{0.7cm}
\mathbf{A}_{2}=
\begin{pmatrix}
1 & 9 & 1/9 \\
1/9 & 1   & 9 \\
9 & 1/9   & 1
\end{pmatrix},
\end{equation}
whose combination with weights $\lambda_{1}=\lambda_{2}=0.5$ is the matrix
\begin{equation}
\label{eq:one_extreme}
\mathbf{A}^{\star}=
\begin{pmatrix}
1 & 1 & 1 \\
1 & 1   & 1 \\
1 & 1   & 1
\end{pmatrix} \in \mathcal{A}^{*}.
\end{equation}
This should make clear that, if the reliability of a decision should depend on the expertise of the decision makers, the evaluation of inconsistency of a consensual matrix should \emph{not} be considered as a substitute for the evaluation of single pairwise comparison matrices. Therefore, if what we want to asses by means of inconsistency tests is the ability of a decision maker, then the inconsistency of their initial preferences is more appropriate than the inconsistency of their modified/aggregated preferences. Clearly, moving towards consensus, or to an ever greater extent having his preferences aggregated with someone else's, does not make the decision maker a better and more capable expert.

To further dwell on this topic, we shall now focus on upper boundaries of inconsistency indices.

\subsection{Upper boundaries of inconsistency indices}
\label{sub:upper}
We start examining the properties of upper boundary and strong upper boundary by considering their connections with the five axioms A1--A5.

\begin{proposition}
\label{prop:2}
If an index satisfies A1 but not A3 or A4, then the index is not upper bounded.
\end{proposition}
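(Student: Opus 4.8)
The plan is to construct, for an arbitrary $n$ and $m=2$, a pair of matrices $\mathbf{A}_1,\mathbf{A}_2\in\mathcal{A}$ whose weighted geometric mean $\mathbf{A}^\star$ has strictly larger inconsistency index value than both $\mathbf{A}_1$ and $\mathbf{A}_2$; by Axiom A1 this is equivalent to exhibiting inconsistent $\mathbf{A}_1,\mathbf{A}_2$ whose geometric combination is \emph{more} inconsistent than either factor. The natural way to do this is to run the idea behind Proposition \ref{prop:1} in reverse: there, two inconsistent matrices averaged to a consistent one; here we want two ``mildly'' inconsistent matrices whose average is ``strongly'' inconsistent. The cleanest construction takes a single fixed inconsistent matrix $\mathbf{C}\notin\mathcal{A}^*$ that is ``intensified'' relative to $\mathbf{A}_1,\mathbf{A}_2$, i.e.\ we want $\mathbf{A}^\star$ to be of the form $\mathbf{C}(b)=\bigl(c_{ij}^{b}\bigr)$ with $b>1$, while $\mathbf{A}_1,\mathbf{A}_2$ are less intensified versions. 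But since A3 is explicitly assumed to fail, the correct move is not to invoke A3 but to pick two matrices that are permutation-related so that A1 alone forces $I(\mathbf{A}_1)=I(\mathbf{A}_2)$, and then choose them so that $\mathbf{A}^\star$ is genuinely inconsistent — the negation of A3 is what \emph{permits} $I(\mathbf{A}^\star)$ to exceed this common value, but we must actually realize it by construction.

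Concretely, I would proceed as follows. First, fix a small inconsistent matrix, e.g.\ for $n=3$ take $\mathbf{A}_1$ with a single off-diagonal perturbation of a consistent matrix, say $a_{13}^{(1)}=t$ instead of $a_{12}^{(1)}a_{23}^{(1)}$, for a parameter $t\neq 1$, padded to size $n$ by bordering with a consistent block. Second, let $\mathbf{A}_2 = \mathbf{P}\mathbf{A}_1\mathbf{P}^T$ for a transposition $\mathbf{P}$ chosen so that the perturbed triple occupies a different set of coordinates; by A2, $I(\mathbf{A}_2)=I(\mathbf{A}_1)$, so $\max\{I(\mathbf{A}_1),I(\mathbf{A}_2)\}=I(\mathbf{A}_1)$. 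Third, compute $\mathbf{A}^\star$ with weights $\lambda_1=\lambda_2=1/2$: because the two perturbations live in different coordinate positions, $\mathbf{A}^\star$ inherits \emph{both} perturbations (each halved in exponent but not cancelled), so $\mathbf{A}^\star\notin\mathcal{A}^*$. Fourth — and this is where the failure of A3/A4 is used — since $I$ satisfies A1 but is not constrained by any monotonicity/intensification axiom, there is no obstruction to $I(\mathbf{A}^\star)>I(\mathbf{A}_1)$; but to turn ``no obstruction'' into a genuine counterexample I would instead argue by contraposition on a whole parametrized family: if $I$ \emph{were} upper bounded, then along the family indexed by $t$ (and by a second parameter controlling the padding) the value $I(\mathbf{A}^\star_t)$ would be dominated by $I(\mathbf{A}_{1,t})$ for every $t$; combined with A1 ($I=\nu$ exactly on $\mathcal{A}^*$) one extracts either A3-type or A4-type monotone behavior on this family, contradicting the hypothesis that A3 and A4 both fail.

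The main obstacle is precisely this last step: ``not A3 or not A4'' is a negative hypothesis, and one cannot directly plug it into a construction — one only knows \emph{some} violating instance exists, with no control over which matrices or parameters witness it. So the real work is to show that a violation of A3 (respectively A4) can be \emph{transported} into a violation of the upper boundary property. I expect the argument to take the shape: take the matrices $\mathbf{B}$ and $\mathbf{B}(b)$ (resp.\ $\mathbf{A}(\delta)$ and $\mathbf{A}(\delta')$) witnessing the failure, with $I(\mathbf{B}(b))<I(\mathbf{B})$; then realize $\mathbf{B}(b)$ — or an intermediate power $\mathbf{B}(b')$ — as the geometric mean $\mathbf{A}^\star$ of two copies of suitably chosen matrices (using $\mathbf{B}(b) = \mathbf{B}(2b')^{1/2}\cdot\mathbf{B}(2b'')^{1/2}$-type identities, since powers of a fixed matrix are closed under geometric averaging), arranged so that $\max$ of the two factors' indices is $<I(\mathbf{B}(b))\le I(\mathbf{A}^\star)$. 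Getting the inequalities to line up — choosing the two factors so that their index values are both strictly below $I(\mathbf{A}^\star)$ while their geometric mean is exactly $\mathbf{A}^\star$ — is the delicate part, and I would expect to need A1 (and possibly A5 for a continuity/intermediate-value argument selecting the right intermediate exponent) to close the gap; but since the statement does not list A5 among the hypotheses, the intended proof is likely a short, explicit construction handling the two cases (not-A3, not-A4) separately rather than the general transport argument sketched here.
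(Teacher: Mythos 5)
Your final paragraph contains the key idea of the paper's proof --- a violation of A3 (or A4) lives on a one-parameter family $\{\mathbf{A}(b)\}_{b}$ that is closed under weighted geometric averaging, since $a_{ij}^{b}=\bigl(a_{ij}^{b'}\bigr)^{\lambda}\bigl(a_{ij}^{b''}\bigr)^{1-\lambda}$ whenever $b=\lambda b'+(1-\lambda)b''$ --- but you stop at the doorstep and explicitly leave the decisive step (``getting the inequalities to line up'') unresolved, so as written the proposal is not a proof. The first two paragraphs are false starts that you yourself flag as such: taking permutation-related factors forces $I(\mathbf{A}_{1})=I(\mathbf{A}_{2})$, but ``no obstruction to $I(\mathbf{A}^{\star})$ exceeding this value'' is not a counterexample, and the contraposition-on-a-parametrized-family argument is never made precise. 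The genuine gap is exactly the one you name: how to convert the mere existence of an A3/A4-violating instance into two explicit factor matrices whose maximum index is strictly below that of their geometric mean.

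That gap closes easily, and neither A5 nor any delicate intermediate-value selection is needed. If A3 fails, there exist $\mathbf{A}$ and $b_{0}>1$ with $I(\mathbf{A}(b_{0}))<I(\mathbf{A})$; such an $\mathbf{A}$ is necessarily inconsistent, since consistency is preserved by entrywise powers and both values would otherwise equal $\nu$. Take as the two ``decision makers'' the matrices $\mathbf{A}(0)=(1)_{n\times n}$, which is consistent and hence satisfies $I(\mathbf{A}(0))=\nu<I(\mathbf{A})$ by A1 (used as in the proof of Proposition \ref{prop:1}), and $\mathbf{A}(b_{0})$, with weights $\lambda_{1}=1-1/b_{0}$ and $\lambda_{2}=1/b_{0}$. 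Their weighted geometric mean is exactly $\mathbf{A}(1)=\mathbf{A}$, and $I(\mathbf{A})>\max\{\nu,\,I(\mathbf{A}(b_{0}))\}$ violates (\ref{eq:upper_bounded}). The A4 case is identical with the single-entry perturbation family $\mathbf{A}(\delta)$ in place of $\mathbf{A}(b)$, using the consistent matrix $\mathbf{A}(1)$ as one endpoint. This is precisely the paper's argument (the paper states the ``peak'' configuration $I(\mathbf{A}(b'))<I(\mathbf{A}(b))>I(\mathbf{A}(b''))$ without spelling out that the left inequality comes from A1 at $b'=0$); your sketch had the right mechanism but did not supply this final, and in fact elementary, choice of factors.
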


\begin{proof}
Let us assume that index $I$ satisfies A1 but not A3. Then there exists a matrix $\mathbf{A}$ such that $I(\mathbf{A}(b')) < I(\mathbf{A}(b)) > I(\mathbf{A}(b''))$ for some $b' < b < b''$. On the other side, $\mathbf{A}(b)$ can be written as the weighted geometric mean of $\mathbf{A}(b')$ and $\mathbf{A}(b'')$, i.e.
\[
\mathbf{A}(b)= \left( a_{ij}^{b}\right) = \left( a_{ij}^{(\lambda b'+ (1-\lambda)b'')}\right) = \left( {a_{ij}^{b'}}^{\lambda} {a_{ij}^{b''}}^{1 - \lambda}\right),
\]
for a suitable $\lambda \in ]0,1[$ as in (\ref{eq:aggregation}). Then it is $I(\mathbf{A}(b)) > \max\{ I(\mathbf{A}(b')) , I(\mathbf{A}(b'')) \}$ and (\ref{eq:upper_bounded}) is violated. Therefore, $I$ is not upper bounded.\\
The proof for the case of A1 holding and A4 not holding is similar, and thus omitted.
\end{proof}

In other words, Proposition \ref{prop:2} states that axioms A3 and A4 are \emph{necessary} conditions for an index to be upper bounded. Moreover, considering that the strong upper boundary property is tighter than the upper boundary property, we can formulate the following corollary.

\begin{corollary}
\label{cor:1}
If an index satisfies A1 but not A3 or A4, then the index is not strongly upper bounded.
\end{corollary}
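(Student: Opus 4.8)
The plan is to derive the statement directly from Proposition \ref{prop:2} by observing that the strong upper boundary property implies the ordinary upper boundary property. First I would record the elementary fact that for any $\boldsymbol{\lambda}\in\mathcal{L}_{m}$ and any reals $x_{1},\ldots,x_{m}$ one has $\sum_{h=1}^{m}\lambda_{h}x_{h}\leq\max\{x_{1},\ldots,x_{m}\}$, since a convex combination never exceeds the largest of its arguments. Applying this with $x_{h}=I(\mathbf{A}_{h})$ shows that whenever (\ref{eq:strongly_upper_bounded}) holds for all admissible inputs, so does (\ref{eq:upper_bounded}); that is, every strongly upper bounded index is upper bounded.

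Next I would invoke Proposition \ref{prop:2}: under the hypothesis that $I$ satisfies A1 but fails A3 or A4, the index $I$ is not upper bounded. Combining this with the implication just established and taking the contrapositive (not upper bounded $\Rightarrow$ not strongly upper bounded), we conclude that $I$ is not strongly upper bounded, which is the claim.

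Since the argument is merely a logical chaining of an already-proved proposition with an elementary inequality, there is no real obstacle; the only point requiring care is the direction of the implication — strong upper boundedness is the \emph{stronger} property, so its failure is the \emph{weaker} conclusion, and one must route through the negative conclusion of Proposition \ref{prop:2} rather than argue positively. If one prefers a self-contained proof, the same witnessing configuration used for Proposition \ref{prop:2} works verbatim: the triple $\mathbf{A}(b'),\mathbf{A}(b),\mathbf{A}(b'')$ with $b'<b<b''$ satisfies $I(\mathbf{A}(b))>\max\{I(\mathbf{A}(b')),I(\mathbf{A}(b''))\}\geq\lambda I(\mathbf{A}(b'))+(1-\lambda)I(\mathbf{A}(b''))$ for the appropriate $\lambda\in\,]0,1[$, so (\ref{eq:strongly_upper_bounded}) is violated; but invoking the proposition is cleaner.
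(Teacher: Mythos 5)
Your proposal is correct and matches the paper's reasoning exactly: the paper justifies the corollary by noting that the strong upper boundary property is tighter than the upper boundary property, and then applies Proposition \ref{prop:2}, which is precisely your chain of implications. The elementary inequality $\sum_{h}\lambda_{h}x_{h}\leq\max_{h}x_{h}$ that you make explicit is the content of the paper's word ``tighter,'' so nothing is missing.
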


Building on Proposition \ref{prop:2}, and drawing on previous results showing that a number of indices do not satisfy A3 or A4 \citep[see][]{BrunelliFedrizziJORS}, we can formalize the fact that these same indices are \emph{not} upper bounded. For brevity, their definitions are omitted and the reader can refer to the original works as well as to a survey paper by \citet{BrunelliCanalFedrizziANOR}.

\begin{corollary}
\label{cor:2}
Indices $GW$ \citep{GoldenWang1989}, $HCI$ \citep{SteinMizzi2007}, $NI_{n}^{\sigma}$ \citep{RamikKorviny2010} and $RE$ \citep{Barzilai1998} are not upper bounded.
\end{corollary}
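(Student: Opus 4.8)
The plan is to invoke Proposition \ref{prop:2} directly, so the entire argument reduces to citing, for each of the four indices, the fact (established in \citep{BrunelliFedrizziJORS}) that it satisfies A1 but fails at least one of A3 and A4. First I would recall that every inconsistency index worthy of the name satisfies A1 — indeed A1 is the minimal requirement that the index vanish exactly on $\mathcal{A}^{*}$ — so this hypothesis is immediate for $GW$, $HCI$, $NI_{n}^{\sigma}$ and $RE$. Then the work is purely bookkeeping: for each index I would point to the place in \citep{BrunelliFedrizziJORS} (or the survey \citep{BrunelliCanalFedrizziANOR}) where a counterexample to A3 or to A4 is exhibited, and conclude via Proposition \ref{prop:2} that the index is not upper bounded.

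Concretely, the steps in order: (i) state that $GW$, $HCI$, $NI_{n}^{\sigma}$ and $RE$ all satisfy A1; (ii) quote from the literature that $GW$ violates A3 (or A4), $HCI$ violates A3 (or A4), $NI_{n}^{\sigma}$ violates A3 (or A4), and $RE$ violates A3 (or A4) — whichever of the two axioms is the one actually shown to fail in each case; (iii) apply Proposition \ref{prop:2} four times. Since Proposition \ref{prop:2} already does the substantive mathematical work (the weighted-geometric-mean trick writing $\mathbf{A}(b)$ as an interpolation of $\mathbf{A}(b')$ and $\mathbf{A}(b'')$), there is nothing left to compute here.

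The only real obstacle is a matter of citation accuracy rather than mathematics: one must be careful to attribute to each of the four indices the correct failure — A3 versus A4 — exactly as recorded in \citep{BrunelliFedrizziJORS}, since Proposition \ref{prop:2} accepts either. If one wanted the corollary to be self-contained one could instead reproduce the relevant counterexamples, but given that the excerpt explicitly says ``For brevity, their definitions are omitted and the reader can refer to the original works,'' the intended proof is the short one: a one-line appeal to Proposition \ref{prop:2} combined with the known axiomatic violations. I would therefore write the proof as essentially ``Each of these indices satisfies A1 but, as shown in \citep{BrunelliFedrizziJORS}, fails A3 or A4; the claim then follows from Proposition \ref{prop:2}.''
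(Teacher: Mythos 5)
Your proposal matches the paper's own derivation exactly: the corollary is stated as an immediate consequence of Proposition \ref{prop:2} together with the results in \citep{BrunelliFedrizziJORS} showing that $GW$, $HCI$, $NI_{n}^{\sigma}$ and $RE$ satisfy A1 but fail A3 or A4. No further argument is given in the paper, so your one-line appeal to Proposition \ref{prop:2} plus the cited axiomatic violations is precisely the intended proof.
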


\begin{example}
 \label{Ex:1}
Consider two decision makers and the corresponding two pairwise
comparison matrices
\[
\mathbf{A}=
\begin{pmatrix}
1 & 4 & 1 \\
1/4 & 1   & 1 \\
1/1 & 1   & 1
\end{pmatrix}
\hspace{0.7cm}
\mathbf{B}=
\begin{pmatrix}
1 & 2 & 1 \\
1/2 & 1   & 7 \\
1 & 1/7   & 1
\end{pmatrix}.
\]
Their preferences are aggregated according to (\ref{eq:aggregation})
obtaining the aggregated pairwise comparison matrix $
\mathbf{A}^{\star} (\lambda) = \left( a_{ij}^{\lambda}
b_{ij}^{1-\lambda} \right)$. Then, the index $RE$ proposed by
\citet{Barzilai1998} is computed for $\lambda \in [0,1]$. In Figure
\ref{fig:Ex1} is reported the plot of this inconsistency index, i.e.
$RE\left( \mathbf{A}^{\star} (\lambda)  \right)$, as a function of
$\lambda$. It can be noted, for example, that the inconsistency of
the aggregated preferences $RE\left( \mathbf{A}^{\star} (0.5)
\right)$ is greater than the inconsistency of $ \mathbf{A}$ and
greater than the inconsistency of $ \mathbf{B}$, i.e. $RE\left(
\mathbf{A}^{\star} (0.5)  \right) > \max\{RE(\mathbf{A}),
RE(\mathbf{B})\}$. Thus, in this example, group preferences can be
more inconsistent than the preferences of all decision makers, as
index $RE$ is not upper bounded.
\begin{figure}[htbp]
    \centering
        \includegraphics[width=0.50\textwidth]{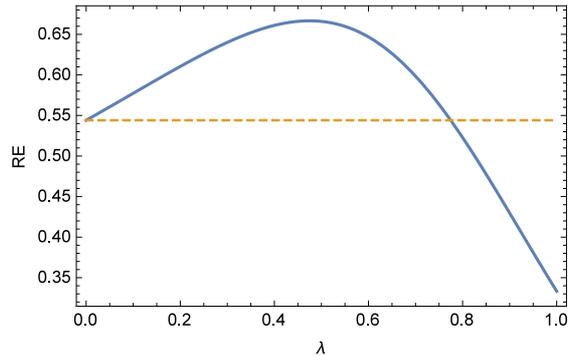}
    \caption{Example of a index which is not upper bounded: $RE$.}
    \label{fig:Ex1}
\end{figure}
\end{example}

Hereafter, until the end of the section, we shall analyze the upper boundaries of some other inconsistency indices.
Saaty's $CI$ remains the most popular inconsistency index in the literature and thus any analysis which does not take it into account, would be incomplete.
\begin{definition}[Consistency Index $CI$]
Given a pairwise comparison matrix $\mathbf{A}$ of order $n$, its Consistency Index ($CI$) is defined as:
\begin{equation}
CI(\mathbf{A})=\frac{\lambda_{\max}(\mathbf{A})-n}{n-1}.
\end{equation}
where $\lambda_{\max}(\mathbf{A})$ is the maximum eigenvalue of $\mathbf{A}$.
\end{definition}
To avoid possible confusion, we remark that the symbol $\lambda_{\max}$ has been used here, as usual,
to denote the largest eigenvalue and should not be confused with the coefficients $\lambda_{h}$.\\
 \citet{LiuEtAl2012} already proved that $CI$ is upper bounded. Here we extend their findings and show
 that it is strongly upper bounded. Let us first recall, in the following Lemma, a result by \citet{ElsnerEtAl1988}.

\begin{lemma}[\citep{ElsnerEtAl1988}]
\label{lemma:radius} Given $m$ matrices
$\mathbf{M}_{1},\ldots,\mathbf{M}_{m} \in \mathbb{R}^{n \times
n}_{+}$ and a  vector $\boldsymbol{\lambda} \in \mathcal{L}_{m}$,
then
\begin{equation}
\label{eq:radius}
\rho(\mathbf{M}^{\star}) \leq \prod_{h=1}^{m} \rho (\mathbf{M}_{h})^{\lambda_{h}}
\end{equation}
where $\rho$ denotes the spectral radius, and $\mathbf{M}^{\star}$
is the weighted  geometric mean of
$\mathbf{M}_{1},\ldots,\mathbf{M}_{m}$ as in (\ref{eq:aggregation}).
\end{lemma}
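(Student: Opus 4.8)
The plan is to reduce the inequality to an entrywise comparison of matrix powers and then to pass to the limit by means of Gelfand's formula for the spectral radius. Throughout I would assume without loss of generality that $\lambda_h>0$ for all $h$ (matrices carrying zero weight can simply be dropped), and, since in our applications the $\mathbf{M}_h$ are positive reciprocal matrices, I would assume that each $\mathbf{M}_h$ has strictly positive entries so that all the geometric means below are well defined; the general nonnegative case then follows by approximating each $\mathbf{M}_h$ by nearby strictly positive matrices and using continuity of the spectral radius.

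\textbf{Step 1 (entrywise H\"older bound on powers).} First I would show, by induction on $k\ge 1$, that
\[
\bigl((\mathbf{M}^{\star})^{k}\bigr)_{ij}\ \le\ \prod_{h=1}^{m}\Bigl((\mathbf{M}_h)^{k}\Bigr)_{ij}^{\lambda_h}\qquad\forall\, i,j .
\]
The case $k=1$ is exactly the defining formula (\ref{eq:aggregation}) for $\mathbf{M}^{\star}$. For the inductive step one writes $\bigl((\mathbf{M}^{\star})^{k+1}\bigr)_{ij}=\sum_{l}\bigl((\mathbf{M}^{\star})^{k}\bigr)_{il}\,(\mathbf{M}^{\star})_{lj}$, bounds each summand using the inductive hypothesis together with the definition of $\mathbf{M}^{\star}$, and then applies the generalized H\"older inequality with exponents $1/\lambda_h$ (which are conjugate in the sense $\sum_h\lambda_h=1$) to the sum over $l$. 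This collapses the sum into $\prod_{h}\bigl((\mathbf{M}_h)^{k+1}\bigr)_{ij}^{\lambda_h}$, completing the induction.

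\textbf{Step 2 (monotone norm and Gelfand limit).} Next I would take the norm $\|\mathbf{M}\|:=\sum_{i,j}|m_{ij}|$, which is monotone with respect to the entrywise order on nonnegative matrices, so that Step 1 yields $\|(\mathbf{M}^{\star})^{k}\|\le\sum_{i,j}\prod_{h}\bigl((\mathbf{M}_h)^{k}\bigr)_{ij}^{\lambda_h}$; one further application of the generalized H\"older inequality, now to the double sum over $i,j$, gives $\|(\mathbf{M}^{\star})^{k}\|\le\prod_{h}\|(\mathbf{M}_h)^{k}\|^{\lambda_h}$. Taking $k$-th roots and letting $k\to\infty$, and invoking Gelfand's formula $\rho(\mathbf{M})=\lim_{k\to\infty}\|\mathbf{M}^{k}\|^{1/k}$, which holds for \emph{any} norm on $\mathbb{R}^{n\times n}$ since all norms on a finite-dimensional space are equivalent, one obtains
\[
\rho(\mathbf{M}^{\star})=\lim_{k\to\infty}\|(\mathbf{M}^{\star})^{k}\|^{1/k}\ \le\ \lim_{k\to\infty}\prod_{h=1}^{m}\|(\mathbf{M}_h)^{k}\|^{\lambda_h/k}=\prod_{h=1}^{m}\rho(\mathbf{M}_h)^{\lambda_h},
\]
which is (\ref{eq:radius}).

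The main obstacle I anticipate is the bookkeeping in Step 1: making the two nested applications of the generalized H\"older inequality fully precise, in particular verifying that the exponents $1/\lambda_h$ are conjugate and handling the degenerate cases where a power entry vanishes (which do not occur in the strictly positive reduction). A conceivable alternative would start from the Collatz--Wielandt characterization $\rho(\mathbf{M})=\max_{x>0}\min_i (\mathbf{M}x)_i/x_i$, but converting that max--min expression into the product $\prod_h\rho(\mathbf{M}_h)^{\lambda_h}$ still requires a H\"older-type estimate, so I would expect the power-and-limit argument above to be the most economical to carry out in detail.
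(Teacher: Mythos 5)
Your proposal is a correct and essentially complete proof, but note that the paper itself does not prove this lemma at all: it is imported verbatim as a known result of Elsner, Johnson and Dias da Silva (1988), so there is no internal proof to compare against. Your two-step argument is sound: the inductive entrywise bound $\bigl((\mathbf{M}^{\star})^{k}\bigr)_{ij}\le\prod_{h}\bigl((\mathbf{M}_h)^{k}\bigr)_{ij}^{\lambda_h}$ follows exactly as you say from the generalized H\"older inequality $\sum_{l}\prod_{h}a_{l,h}^{\lambda_h}\le\prod_{h}\bigl(\sum_{l}a_{l,h}\bigr)^{\lambda_h}$ with conjugate exponents $1/\lambda_h$ (itself a consequence of weighted AM--GM after normalization, and valid even when some entries vanish), the second H\"older application over the double index $(i,j)$ gives $\|(\mathbf{M}^{\star})^{k}\|\le\prod_{h}\|(\mathbf{M}_h)^{k}\|^{\lambda_h}$ for the entrywise $\ell_1$ norm, and Gelfand's formula then yields (\ref{eq:radius}); the reduction to strictly positive matrices and positive weights is harmless and is in fact unnecessary for the paper's application, where all $\mathbf{M}_h$ are positive reciprocal matrices. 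This power-and-limit route is the standard ``Kingman-type'' proof of the log-convexity of the spectral radius and differs in flavour from the Perron-eigenvector (Collatz--Wielandt) argument used in the cited source, but it is, if anything, more elementary and self-contained; the only detail worth writing out explicitly if you include it is the normalization step establishing the generalized H\"older inequality itself.
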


Taking into account Lemma \ref{lemma:radius}, and the inequality between arithmetic and geometric means, one derives that
\begin{equation}
%\label{eq:radius}
\rho(\mathbf{A}^{\star}) \leq \sum_{h=1}^{m}  \lambda_{h} \rho ( \mathbf{A}_{h} ).
\end{equation}
Since the Perron-Frobenius theorem guarantees that the spectral
radius of a positive  matrix equals its maximum eigenvalue,
$\rho(\mathbf{A})=\lambda_{\max}(\mathbf{A})$, by noting that
$CI(\mathbf{A})$ is a positive affine transformation of
$\lambda_{\max}(\mathbf{A})$, the following corollary, stating the
strong upper boundedness of $CI$, can be derived.

\begin{corollary}
Index $CI$ is strongly upper bounded, i.e. the pairwise comparison matrix $\mathbf{A}^{\star}$ as in (\ref{eq:aggregation}) satisfies
\begin{equation}
\label{eq:CI_upper} CI(\mathbf{A}^{\star}) \leq  \sum_{h=1}^{m}
\lambda_{h}  CI (\mathbf{A}_{h}) ~~~~\forall
\mathbf{A}_{1},\ldots,\mathbf{A}_{m} \in
\mathcal{A},~~\boldsymbol{\lambda} \in \mathcal{L}_{m}.
\end{equation}
\end{corollary}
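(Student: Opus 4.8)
The plan is to assemble the proof from the three ingredients displayed just before the statement: the spectral-radius inequality of Lemma~\ref{lemma:radius}, the weighted arithmetic--geometric mean inequality, and the observation that $CI$ is an increasing affine function of $\lambda_{\max}$. First I would note that $\mathbf{A}^{\star}$ and each $\mathbf{A}_{h}$ lie in $\mathbb{R}^{n\times n}_{+}$ (a weighted geometric mean of positive reciprocal matrices is again positive and reciprocal), so Lemma~\ref{lemma:radius} applies with $\mathbf{M}_{h}=\mathbf{A}_{h}$ and gives
\[
\rho(\mathbf{A}^{\star}) \leq \prod_{h=1}^{m} \rho(\mathbf{A}_{h})^{\lambda_{h}}.
\]

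Next, since $\boldsymbol{\lambda}\in\mathcal{L}_{m}$ is a vector of nonnegative weights summing to one, the weighted AM--GM inequality yields $\prod_{h=1}^{m}\rho(\mathbf{A}_{h})^{\lambda_{h}}\leq\sum_{h=1}^{m}\lambda_{h}\rho(\mathbf{A}_{h})$, hence $\rho(\mathbf{A}^{\star})\leq\sum_{h=1}^{m}\lambda_{h}\rho(\mathbf{A}_{h})$. I would then invoke the Perron--Frobenius theorem: being entrywise positive, $\mathbf{A}^{\star}$ and each $\mathbf{A}_{h}$ have spectral radius equal to their largest (real, positive) eigenvalue, so $\rho=\lambda_{\max}$ may be substituted throughout, giving $\lambda_{\max}(\mathbf{A}^{\star})\leq\sum_{h=1}^{m}\lambda_{h}\lambda_{\max}(\mathbf{A}_{h})$.

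Finally I would pass to $CI$. Writing $CI(\mathbf{A})=\frac{1}{n-1}\lambda_{\max}(\mathbf{A})-\frac{n}{n-1}$ with $n>2$, the slope $\frac{1}{n-1}$ is positive so the inequality for $\lambda_{\max}$ is preserved, and using $\sum_{h=1}^{m}\lambda_{h}=1$ to distribute the additive constant one obtains
\[
CI(\mathbf{A}^{\star}) \leq \frac{1}{n-1}\sum_{h=1}^{m}\lambda_{h}\lambda_{\max}(\mathbf{A}_{h}) - \frac{n}{n-1}\sum_{h=1}^{m}\lambda_{h} = \sum_{h=1}^{m}\lambda_{h}\,CI(\mathbf{A}_{h}),
\]
which is precisely (\ref{eq:CI_upper}).

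The only genuine mathematical content is Lemma~\ref{lemma:radius}, which is quoted and may be assumed; everything else is routine. The one step that deserves mild care is the last one: the normalization $\sum_{h}\lambda_{h}=1$ is exactly what lets the affine shift $-n/(n-1)$ commute with the convex combination, so that an inequality in $\lambda_{\max}$ becomes the \emph{same} inequality in $CI$ rather than one with a leftover constant. I do not anticipate any serious obstacle beyond bookkeeping.
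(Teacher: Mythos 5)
Your proposal is correct and follows exactly the paper's own route: Lemma~\ref{lemma:radius} combined with the weighted AM--GM inequality, then Perron--Frobenius to identify $\rho$ with $\lambda_{\max}$, and finally the affine transformation to $CI$ using $\sum_h\lambda_h=1$. Your explicit remark about why the additive constant $-n/(n-1)$ distributes across the convex combination is a detail the paper leaves implicit, but the argument is the same.
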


Among other indices, the Geometric Consistency Index ($GCI$) has probably been the most widely studied.

\begin{definition}[Geometric Consistency Index $GCI$ \citep{CrawfordWilliams1985}]
Given a pairwise comparison matrix $\mathbf{A}$ of order $n$, its Geometric Consistency Index ($GCI$) is defined as:
\begin{equation}
\label{eq:GCI}
GCI(\mathbf{A})=\frac{2}{(n-1)(n-2)}\sum_{i=1}^{n-1}\sum_{j=i+1}^{n} \, \ln^2  \left( a_{ij} \frac{\left( \prod_{k=1}^{n}a_{jk} \right)^{\frac{1}{n}}}{\left( \prod_{k=1}^{n}a_{ik} \right)^{\frac{1}{n}}}\right) .
\end{equation}
\end{definition}

\citet{EscobarEtAl2004} proved that index $GCI$ is upper bounded. Nevertheless, here we shall prove that $GCI$ is \emph{strongly} upper bounded.

\begin{proposition}
Index $GCI$ is strongly upper bounded, i.e. the pairwise comparison matrix $\mathbf{A}^{\star}$ as in (\ref{eq:aggregation}) satisfies
\begin{equation}
\label{eq:GCI_upper}
GCI(\mathbf{A}^{\star}) \leq  \sum_{h=1}^{m} \lambda_{h} GCI(\mathbf{A}_{h}) ~~~~\forall \mathbf{A}_{1},\ldots,\mathbf{A}_{m} \in \mathcal{A},~~\boldsymbol{\lambda} \in \mathcal{L}_{m}.
\end{equation}
\end{proposition}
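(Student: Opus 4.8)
The plan is to exploit the fact that, in logarithmic coordinates, $GCI$ is a positive multiple of a sum of squares of \emph{linear} functions of the entries $\ln a_{ij}$, while the weighted geometric mean aggregation (\ref{eq:aggregation}) becomes an ordinary $\boldsymbol{\lambda}$-weighted arithmetic mean of those log-entries. Convexity of $t\mapsto t^{2}$ then gives the inequality termwise.

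First I would set $\ell_{ij}=\ln a_{ij}$ and, for each pair $i<j$, define
\[
e_{ij}(\mathbf{A}) = \ell_{ij} + \frac{1}{n}\sum_{k=1}^{n}\ell_{jk} - \frac{1}{n}\sum_{k=1}^{n}\ell_{ik},
\]
so that the argument of the logarithm in (\ref{eq:GCI}) is exactly $\exp\bigl(e_{ij}(\mathbf{A})\bigr)$ and hence
\[
GCI(\mathbf{A}) = \frac{2}{(n-1)(n-2)} \sum_{i=1}^{n-1}\sum_{j=i+1}^{n} e_{ij}(\mathbf{A})^{2}.
\]
The key point is that $\mathbf{A}\mapsto e_{ij}(\mathbf{A})$ is a linear map of the family $(\ell_{ij})$, being a fixed linear combination of the $\ell_{k\ell}$'s.

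Second, I would observe that for the aggregated matrix $\mathbf{A}^{\star}$ of (\ref{eq:aggregation}) one has $\ln a_{ij}^{\star}=\sum_{h=1}^{m}\lambda_{h}\ln a_{ij}^{(h)}$; that is, the log-entries of $\mathbf{A}^{\star}$ are the $\boldsymbol{\lambda}$-weighted arithmetic mean of the log-entries of $\mathbf{A}_{1},\ldots,\mathbf{A}_{m}$. By linearity of $e_{ij}$ this propagates to
\[
e_{ij}(\mathbf{A}^{\star}) = \sum_{h=1}^{m}\lambda_{h}\, e_{ij}(\mathbf{A}_{h})
\]
for every pair $i<j$.

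Third, since $t\mapsto t^{2}$ is convex and $\boldsymbol{\lambda}\in\mathcal{L}_{m}$, Jensen's inequality yields $e_{ij}(\mathbf{A}^{\star})^{2}\le\sum_{h=1}^{m}\lambda_{h}\, e_{ij}(\mathbf{A}_{h})^{2}$ for each pair; summing over $i<j$, multiplying by the positive constant $2/((n-1)(n-2))$, and swapping the two finite sums gives
\[
GCI(\mathbf{A}^{\star}) \le \sum_{h=1}^{m}\lambda_{h}\left( \frac{2}{(n-1)(n-2)}\sum_{i=1}^{n-1}\sum_{j=i+1}^{n} e_{ij}(\mathbf{A}_{h})^{2}\right) = \sum_{h=1}^{m}\lambda_{h}\, GCI(\mathbf{A}_{h}),
\]
which is (\ref{eq:GCI_upper}). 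I do not expect any genuine obstacle here; the only step requiring care is checking that the argument of the logarithm in (\ref{eq:GCI}) is indeed $\exp(e_{ij})$ with $e_{ij}$ linear in the $\ln a_{ij}$, after which the conclusion is immediate from convexity. Note in particular that $n$ and $m$ are arbitrary and the inequality holds with the same constant for all of them, and that (as observed in the Remark after Definition \ref{def:boundary}) this in particular shows $GCI$ is convex in $(\lambda_{1},\ldots,\lambda_{m})$.
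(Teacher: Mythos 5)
Your proof is correct. The underlying engine is the same as in the paper---pass to logarithmic coordinates, where the weighted geometric mean of matrices becomes a $\boldsymbol{\lambda}$-weighted arithmetic mean, and then invoke convexity of $t\mapsto t^{2}$ termwise---but the decomposition you use is genuinely different. The paper first replaces $GCI$ by a proportional quantity indexed by \emph{triads}, namely $\sum_{i<j<k}\bigl(\ln a_{ij}a_{jk}a_{ki}\bigr)^{2}$, relying on the external proportionality result of Brunelli, Critch and Fedrizzi (2013b), and then proves the scalar inequality $\bigl(\sum_{h}\lambda_{h}y_{h}\bigr)^{2}\leq\sum_{h}\lambda_{h}y_{h}^{2}$ triad by triad. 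You instead work directly from the definition (\ref{eq:GCI}), observing that each log-residual $e_{ij}$ is a fixed linear functional of the log-entries, so that $e_{ij}(\mathbf{A}^{\star})=\sum_{h}\lambda_{h}e_{ij}(\mathbf{A}_{h})$, and apply Jensen pair by pair. Your route has the advantage of being self-contained (no appeal to the proportionality lemma) and of making transparent that the argument works for \emph{any} index that is a positive combination of squares of linear functionals of $(\ln a_{ij})$; the paper's route buys a cleaner one-variable inequality per triad and reuses machinery ($x_{h}=a_{ij}^{(h)}a_{jk}^{(h)}a_{ki}^{(h)}$) that it also deploys for $CI^{*}$ and $I_{CD}$. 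Both are complete proofs of the proposition.
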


\begin{proof}
As observed by \citet{BrunelliCritchFedrizzi2011}, $GCI(\mathbf{A})$ is proportional to the following quantity
\begin{equation}
\label{eq:prop}
\sum_{i<j<k}^{n} \left(\ln a_{ij}a_{jk}a_{ki}\right)^2.
\end{equation}
Therefore, by expanding and rearranging (\ref{eq:GCI_upper}) in the form (\ref{eq:prop}) one obtains
\begin{equation}
\label{eq:CI_proof2}
\sum_{i<j<k}^{n}
\left(
\ln \left( \prod_{h=1}^{m}{a_{ij}^{(h)}}^{\lambda_{h}}\prod_{h=1}^{m}{a_{jk}^{(h)}}^{\lambda_{h}}\prod_{h=1}^{m}{a_{ki}^{(h)}}^{\lambda_{h}}
\right) \right)^{2}
\leq
\sum_{i<j<k}^{n}
\sum_{h=1}^{m}
\lambda_{h}
 \left( \ln
\left(
a_{ij}^{(h)} a_{jk}^{(h)} a_{ki}^{(h)}\right) \right)^2 \, .
\end{equation}
If we analyze a single transitivity $(i,j,k)$, then we shall drop $\sum_{i<j<k}^{n}$ and, for notational convenience, use $x_{h}: = a_{ij}^{(h)} a_{jk}^{(h)} a_{ki}^{(h)}$. Then, we can rewrite it as
\begin{equation}
\label{eq:GCI_log}
\left(
\ln
\prod_{h=1}^{m}x_{h}^{\lambda_{h}}
\right)^{2}
\leq
\sum_{h=1}^{m} \lambda_{h}
\left(
\ln
x_{h}
\right)^{2}
\end{equation}
Considering that $\ln a + \ln b = \ln ab$ and $\ln a^{b}=b \ln a$, it becomes
\[
\left(
\sum_{h=1}^{m}
\lambda_{h}
\ln
x_{h}
\right)^{2}\leq
\sum_{h=1}^{m} \lambda_{h}
\left( \ln
x_{h} \right)^{2}.
\]
Now by putting $y_{h}:=\ln x_{h}$, we can rewrite it as
\[
\left(
\sum_{h=1}^{m}
\lambda_{h}
y_{h}
\right)^{2}
\leq
\sum_{h=1}^{m} \lambda_{h}
y_{h}^{2}.
\]
which holds thanks to the convexity of the quadratic function.
Thus, (\ref{eq:GCI_log}) is satisfied. Extending it to the sum for all the transitivity is straightforward as we know that, as it holds for all transitivities, then it must hold for their sum too.
\end{proof}

Another index, $CI^{*}$, was proposed by \citet{PelaezLamata2003} and used to improve the consistency of judgments.

\begin{definition}[$CI^{*}$ \citep{PelaezLamata2003}]
Given a pairwise comparison matrix $\mathbf{A}$ of order $n$, $CI^{*}$ is defined as:
\begin{equation}
\label{eq:det}
CI^{*}(\mathbf{A})=\sum_{i=1}^{n-2}
\sum_{j=i+1}^{n-1} \sum_{k=j+1}^{n}
 \left(\frac{a_{ij}a_{jk}}{a_{ik}} + \frac{a_{ik}}{a_{ij}a_{jk}} -2\right)\bigg/
\binom{n}{3}.
\end{equation}
\end{definition}

The validity of such an index has been corroborated by the fact that
\citet{ShiraishiEtAl1998} independently suggested another
inconsistency index ($c_{3}$) which was in effect proved to be
proportional to $CI^{*}$ by \citet {BrunelliCritchFedrizzi2011}.
Notably, both indices $CI^{*}$ and $c_{3}$ were also used as an
objective function in order to estimate missing elements in
incomplete pairwise comparison matrices \citep{ShiraishiEtAl1999}.
With the  following proposition we prove that $CI^{*}$, and
consequently also $c_{3}$, are strongly upper bounded.

\begin{proposition}
\label{prop:CI}
Index $CI^{*}$ is strongly upper bounded, i.e. the pairwise comparison matrix $\mathbf{A}^{\star}$ as in (\ref{eq:aggregation}) satisfies
\begin{equation}
\label{eq:CI*_upper}
CI^{*}(\mathbf{A}^{\star}) \leq  \sum_{h=1}^{m} \lambda_{h} CI^{*}(\mathbf{A}_{h}) ~~~~\forall \mathbf{A}_{1},\ldots,\mathbf{A}_{m} \in \mathcal{A},~~\boldsymbol{\lambda} \in \mathcal{L}_{m}.
\end{equation}
\end{proposition}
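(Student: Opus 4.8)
The plan is to mimic the structure of the proof just given for $GCI$. Since $CI^{*}(\mathbf{A})$ is, up to the fixed positive constant $1/\binom{n}{3}$, the sum over all triples $i<j<k$ of the quantity
\[
\frac{a_{ij}a_{jk}}{a_{ik}} + \frac{a_{ik}}{a_{ij}a_{jk}} - 2,
\]
and since both sides of (\ref{eq:CI*_upper}) are such sums divided by the same constant, it suffices to prove the inequality term-by-term for a single fixed triple $(i,j,k)$ and then add the resulting inequalities over all $\binom{n}{3}$ triples.

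So I would fix a triple and, exactly as in the $GCI$ proof, set $x_{h} := a_{ij}^{(h)} a_{jk}^{(h)} / a_{ik}^{(h)} = a_{ij}^{(h)} a_{jk}^{(h)} a_{ki}^{(h)} > 0$ for each decision maker $h$. Because the aggregation (\ref{eq:aggregation}) is the weighted geometric mean, the corresponding ratio for $\mathbf{A}^{\star}$ is $x^{\star} = \prod_{h=1}^{m} x_{h}^{\lambda_{h}}$. Writing $y_{h} := \ln x_{h}$ we get $\ln x^{\star} = \sum_{h=1}^{m} \lambda_{h} y_{h}$, and the single-triple contribution to $CI^{*}$ equals $g(y_{h})$, where $g(t) := e^{t} + e^{-t} - 2 = 2(\cosh t - 1)$. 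The inequality to be established for this triple is therefore
\[
g\!\left( \sum_{h=1}^{m} \lambda_{h} y_{h} \right) \le \sum_{h=1}^{m} \lambda_{h}\, g(y_{h}).
\]

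This is precisely Jensen's inequality for the function $g$, which is convex on $\mathbb{R}$ since $g''(t) = e^{t} + e^{-t} > 0$ everywhere. Hence the term-by-term inequality holds for every triple; summing over all $\binom{n}{3}$ triples and dividing by $\binom{n}{3}$ yields (\ref{eq:CI*_upper}). Finally, since $c_{3}$ is proportional to $CI^{*}$ with a positive proportionality constant \citep{BrunelliCritchFedrizzi2011}, the same conclusion holds for $c_{3}$.

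As for the main obstacle: there is essentially none of substance. The only thing one must get right is the identification of the per-triple term with a convex function of the logarithm of the transitivity ratio, after which convexity (Jensen) does all the work, exactly as in the $GCI$ case; and one must check that the reduction to a single triple is legitimate, i.e. that both sides of (\ref{eq:CI*_upper}) are genuinely sums of the per-triple quantities over the same index set with the same normalizing constant, which is immediate from the definition of $CI^{*}$ and of $\mathbf{A}^{\star}$.
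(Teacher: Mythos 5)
Your proof is correct. The reduction to a single triple $(i,j,k)$ and the substitution $x_{h}=a_{ij}^{(h)}a_{jk}^{(h)}a_{ki}^{(h)}$ are exactly what the paper does, but the final step differs: the paper splits the per-triple inequality into two pieces, $\prod_{h}x_{h}^{\lambda_{h}}\leq\sum_{h}\lambda_{h}x_{h}$ and $\prod_{h}(1/x_{h})^{\lambda_{h}}\leq\sum_{h}\lambda_{h}(1/x_{h})$, and disposes of each by the weighted arithmetic--geometric mean inequality, whereas you prove the combined inequality in a single stroke by observing that the per-triple contribution is $g(\ln x)$ with $g(t)=e^{t}+e^{-t}-2$ convex, and applying Jensen to $g$. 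The two routes are logically very close (weighted AM--GM is itself Jensen for the exponential), but yours is slightly more unified --- one convexity argument instead of two --- and it makes the structural parallel with the paper's $GCI$ proof (where convexity of the square was the engine) completely transparent. The paper's version is marginally more elementary in that it only ever invokes the classical AM--GM inequality. Either way the argument is complete; your closing remark that the result transfers to $c_{3}$ by proportionality matches the paper's claim as well.
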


\begin{proof}
By expanding and rearranging (\ref{eq:CI*_upper}) one obtains
\begin{equation}
\begin{split}
&\sum_{i<j<k}^{n} \left(
\prod_{h=1}^{m} {a_{ij}^{(h)}}^{\lambda_{h}} \prod_{h=1}^{m} {a_{jk}^{(h)}}^{\lambda_{h}} \prod_{h=1}^{m} {a_{ki}^{(h)}}^{\lambda_{h}} +
\frac{1}{\prod_{h=1}^{m} {a_{ij}^{(h)}}^{\lambda_{h}} \prod_{h=1}^{m} {a_{jk}^{(h)}}^{\lambda_{h}} \prod_{h=1}^{m} {a_{ki}^{(h)}}^{\lambda_{h}}}-2
\right) \leq \\
& \sum_{h=1}^{m} \sum_{i<j<k}^{n} \lambda_{h} \left( a_{ij}^{(h)}a_{jk}^{(h)}a_{ki}^{(h)} + \frac{1}{a_{ij}^{(h)}a_{jk}^{(h)}a_{ki}^{(h)}} -2  \right).
\end{split}
\end{equation}
Now, similarly to what was done in the previous proof, we consider the single transitivity $(i,j,k)$ and rename the comparisons as follows: $x_{h}: = a_{ij}^{(h)} a_{jk}^{(h)} a_{ki}^{(h)}$. Thus, we obtain
\begin{equation}
\label{eq:15}
\left(
\prod_{h=1}^{m} x_{h}^{\lambda_{h}} +
\frac{1}{\prod_{h=1}^{m} x_{h}^{\lambda_{h}}}-2
\right) \leq
 \sum_{h=1}^{m} \lambda_{h} \left( x_{h} + \frac{1}{x_{h}} -2  \right)
\end{equation}
which can be rewritten as
\begin{equation}
\label{eq:CI*_4}
\underbrace{\prod_{h=1}^{m}   x_{h} ^{\lambda_{h}}}_{\text{I}} +
\underbrace{\prod_{h=1}^{m} \left( \frac{1}{x_{h}} \right)^{\lambda_{h}}}_{\text{II}}
\leq
\underbrace{\sum_{h=1}^{m}\lambda_{h} x_{h}}_{\text{III}} +
\underbrace{\sum_{h=1}^{m}\lambda_{h} \frac{1}{x_{h}}}_{\text{IV}}.
\end{equation}
Then, it is sufficient to prove that $\text{I}\leq \text{III}$ and $\text{II} \leq \text{IV}$. Namely, the following two inequalities must hold simultaneously.
\[
\prod_{h=1}^{m} x_{h}^{\lambda_{h}} \leq \sum_{h=1}^{m}\lambda_{h} x_{h}~~~~\text{and}~~~~\prod_{h=1}^{m} \left( \frac{1}{x_{h}} \right)^{\lambda_{h}} \leq \sum_{h=1}^{m}\lambda_{h} \frac{1}{x_{h}} \, .
\]
The fact that they hold derives from the inequality between the geometric and the arithmetic mean. At this stage it is proved that inequality (\ref{eq:15}) is true for single transitivities, but its extension for the sum $\sum_{i<j<k}^{n}$ is straightforward.
\end{proof}

More recently, another index, here denoted by $I_{CD}$, was proposed and grounded in the theory of Abelian linearly ordered groups \citep{CavalloD'Apuzzo2009,CavalloD'Apuzzo2010}. This index can be equivalently formulated for different types of preference relations where judgments are expressed on different scales. %Further studies also related this index with the methods to find a suitable priority vector \cite{CavalloD'Apuzzo2012}.

\begin{definition}[$I_{CD}$ \citep{CavalloD'Apuzzo2009}]
Given a pairwise comparison matrix $\mathbf{A}$ of order $n$, $I_{CD}$ is defined as:
\begin{equation}
\label{eq:I_CD}
I_{CD}(\mathbf{A})=\prod_{i=1}^{n-2} \prod_{j=i+1}^{n-1} \prod_{k=j+1}^{n}
\left( \max \left\{ \frac{a_{ij}a_{jk}}{a_{ik}} , \frac{a_{ik}}{a_{ij}a_{jk}}  \right\} \right)^{\frac{1}{{n\choose 3}}}.
\end{equation}
\end{definition}

With the following proposition, we state that also index $ I_{CD} $ is strongly upper bounded.

\begin{proposition}
Index $I_{CD}$ is strongly upper bounded, i.e. the pairwise comparison matrix $\mathbf{A}^{\star}$ as in (\ref{eq:aggregation}) satisfies
\begin{equation}
\label{eq:ICD_upper}
I_{CD}(\mathbf{A}^{\star}) \leq  \sum_{h=1}^{m} \lambda_{h} I_{CD} (\mathbf{A}_{h}) ~~~~\forall \mathbf{A}_{1},\ldots,\mathbf{A}_{m} \in \mathcal{A},~~\boldsymbol{\lambda} \in \mathcal{L}_{m}.
\end{equation}
\end{proposition}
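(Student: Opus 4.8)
The plan is to prove a stronger intermediate statement, namely that $I_{CD}(\mathbf{A}^{\star})\le\prod_{h=1}^{m}I_{CD}(\mathbf{A}_{h})^{\lambda_{h}}$ --- that is, $I_{CD}$ of the aggregated matrix is bounded by the \emph{weighted geometric} mean of the individual values --- and only at the very end to pass from the geometric to the arithmetic mean via the classical inequality between them. The order of these two steps matters: carrying them out the other way round (bounding each triple by an arithmetic mean first and only then multiplying over triples) produces, through the super-additivity of the geometric mean, an inequality pointing the wrong way, so this ordering is the one subtle point of the argument.

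As in the proofs for $GCI$ and $CI^{*}$, I would reduce to a single triple $(i,j,k)$. Writing $x_{h}:=a_{ij}^{(h)}a_{jk}^{(h)}a_{ki}^{(h)}$ and using reciprocity to identify $a_{ki}^{(h)}$ with $1/a_{ik}^{(h)}$, the factor of $I_{CD}(\mathbf{A}_{h})$ associated with this triple (before raising to the power $1/\binom{n}{3}$) is $\max\{x_{h},1/x_{h}\}=\exp(|\ln x_{h}|)$, while (\ref{eq:aggregation}) gives $a_{ij}^{\star}a_{jk}^{\star}a_{ki}^{\star}=\prod_{h}x_{h}^{\lambda_{h}}$, so the corresponding factor of $I_{CD}(\mathbf{A}^{\star})$ equals $\exp(|\sum_{h}\lambda_{h}\ln x_{h}|)$. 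The heart of the matter is then the triangle inequality $|\sum_{h}\lambda_{h}\ln x_{h}|\le\sum_{h}\lambda_{h}|\ln x_{h}|$, which yields
\[
\max\Bigl\{\prod_{h=1}^{m}x_{h}^{\lambda_{h}},\Bigl(\prod_{h=1}^{m}x_{h}^{\lambda_{h}}\Bigr)^{-1}\Bigr\}
=\exp\Bigl(\Bigl|\sum_{h=1}^{m}\lambda_{h}\ln x_{h}\Bigr|\Bigr)
\le\prod_{h=1}^{m}\bigl(\exp|\ln x_{h}|\bigr)^{\lambda_{h}}
=\prod_{h=1}^{m}\Bigl(\max\{x_{h},1/x_{h}\}\Bigr)^{\lambda_{h}}.
\]

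Taking the product of this inequality over all $i<j<k$, raising to the power $1/\binom{n}{3}$, and using that a product of weighted geometric means is the weighted geometric mean of the products, one obtains $I_{CD}(\mathbf{A}^{\star})\le\prod_{h=1}^{m}I_{CD}(\mathbf{A}_{h})^{\lambda_{h}}$. Since every $I_{CD}(\mathbf{A}_{h})$ is a positive real and $\boldsymbol{\lambda}\in\mathcal{L}_{m}$, the inequality between the weighted geometric and arithmetic means gives $\prod_{h}I_{CD}(\mathbf{A}_{h})^{\lambda_{h}}\le\sum_{h}\lambda_{h}I_{CD}(\mathbf{A}_{h})$, and chaining the two bounds establishes (\ref{eq:ICD_upper}). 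No step is computationally heavy; the only thing to be careful about is the order in which the geometric mean over decision makers and the product over triples are taken, as explained above.
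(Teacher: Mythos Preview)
Your proof is correct, and it differs from the paper's in a meaningful way.

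The paper proceeds triple by triple as you do, but at the single-triple level it aims directly for the \emph{arithmetic} bound
\[
\max\Bigl\{\prod_{h}x_{h}^{\lambda_{h}},\prod_{h}(1/x_{h})^{\lambda_{h}}\Bigr\}\le\sum_{h}\lambda_{h}\max\{x_{h},1/x_{h}\},
\]
establishing it by inserting the intermediate quantity $\max\{\sum_{h}\lambda_{h}x_{h},\sum_{h}\lambda_{h}/x_{h}\}$ and invoking the arithmetic--geometric mean inequality on each argument of the outer $\max$. The paper then simply ``drops'' the product over $i<j<k$ and the exponent $1/\binom{n}{3}$ without spelling out how the triple-wise arithmetic bound recombines into the global one.

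Your argument is tighter: by using the triangle inequality for $|\ln(\cdot)|$ you obtain the \emph{geometric} bound $\max\{\cdot\}\le\prod_{h}(\max\{x_{h},1/x_{h}\})^{\lambda_{h}}$ at the triple level, which is purely multiplicative and therefore passes cleanly through the product over triples and the power $1/\binom{n}{3}$, yielding the genuinely stronger intermediate statement $I_{CD}(\mathbf{A}^{\star})\le\prod_{h}I_{CD}(\mathbf{A}_{h})^{\lambda_{h}}$. Only then do you invoke the arithmetic--geometric mean inequality, once, on the final scalar quantities. This is exactly the ``order of steps'' issue you flag at the outset, and it makes your proof both shorter and more transparent than the paper's on the point where the paper is least explicit.
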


\begin{proof}
If we neglect the exponent, then we can expand and rearrange (\ref{eq:ICD_upper}) as follows:
\begin{equation}
\begin{split}
& \prod_{i<j<k}^{n} \max \left\{ \prod_{h=1}^{m} {a_{ij}^{(h)}}^{\lambda_{h}} \prod_{h=1}^{m}{a_{jk}^{(h)}}^{\lambda_{h}} \prod_{h=1}^{m}{a_{ki}^{(h)}}^{\lambda_{h}} , \frac{1}{\prod_{h=1}^{m}{a_{ij}^{(h)}}^{\lambda_{h}} \prod_{h=1}^{m}{a_{jk}^{(h)}}^{\lambda_{h}} \prod_{h=1}^{m}{a_{ki}^{(h)}}^{\lambda_{h}}} \right\}
\leq\\
& \sum_{h=1}^{m} \lambda_{h} \prod_{i<j<k}^{n} \max \left\{ a_{ij}^{(h)}a_{jk}^{(h)}a_{ki}^{(h)} , \frac{1}{a_{ij}^{(h)}a_{jk}^{(h)}a_{ki}^{(h)}} \right\}.
\end{split}
\end{equation}
Now, by dropping $\prod_{i<j<k}^{n}$ we consider the single transitivity $(i,j,k)$. Moreover, by means of $x_{h}:=a_{ij}^{(h)} a_{jk}^{(h)} a_{ki}^{(h)}$ we briefly obtain
\begin{equation}
\label{eq:K_0}
\max \left\{ \prod_{h=1}^{m} x_{h}^{\lambda_{h}} , \prod_{h=1}^{m} \left( \frac{1}{x_{h}}\right)^{\lambda_{h}} \right\} \leq \sum_{h=1}^{m} \lambda_{h} \max \left\{ x_{h} , \frac{1}{x_{h}} \right\}.
\end{equation}
Now, in order to prove the previous inequality, we conjecture that there exists a quantity as follows
\begin{equation}
\label{eq:K_1}
\max \left\{ \sum_{h=1}^{m} \lambda_{h} x_{h} , \sum_{h=1}^{m} \lambda_{h} \frac{1}{x_{h}} \right\}
\end{equation}
 whose value is always bounded by the left and right hand sides of (\ref{eq:K_0}), i.e.
 \[
\max \left\{ \prod_{h=1}^{m} x_{h}^{\lambda_{h}} , \prod_{h=1}^{m} \left(  \frac{1}{x_{h}}\right)^{\lambda_{h}} \right\} \leq (\ref{eq:K_1}) \leq \sum_{h=1}^{m} \lambda_{h} \max \left\{ x_{h} , \frac{1}{x_{h}} \right\}.
\]
Hence, it is sufficient to prove the following two inequalities,
\begin{align}
\label{1}
\max \left\{ \prod_{h=1}^{m} x_{h}^{\lambda_{h}} , \prod_{h=1}^{m} \left( \frac{1}{x_{h}}\right)^{\lambda_{h}} \right\}
\leq &
\max \left\{ \sum_{h=1}^{m} \lambda_{h} x_{h} , \sum_{h=1}^{m} \lambda_{h} \frac{1}{x_{h}} \right\} \, , \\
\label{2}
\max \left\{ \sum_{h=1}^{m} \lambda_{h} x_{h} , \sum_{h=1}^{m} \lambda_{h} \frac{1}{x_{h}} \right\}
\leq &
\sum_{h=1}^{m} \lambda_{h} \max \left\{ x_{h} , \frac{1}{x_{h}} \right\} \, .
\end{align}
Inequality (\ref{1}) holds thanks to the inequality between arithmetic and geometric mean and arguments similar to those used in the proof of (\ref{eq:CI*_4}). Conversely, (\ref{2}) holds since it can be verified that both the arguments of the $\max$ operator in the left hand side of (\ref{2}) cannot be greater than the right hand side of the same inequality.
\end{proof}

Another index was introduced by \citet{Koczkodaj1993} and later studied \citep{DuszakKoczkodaj1994} and compared with other indices, for example with $CI$ \citep{Bozoki}.

\begin{definition}[Index $K$ \citep{DuszakKoczkodaj1994}]
Given a pairwise comparison matrix $\mathbf{A}$ or order $n$, the inconsistency index $K$ is defined as:
\begin{equation}
\label{eq:K}
K(\mathbf{A})=
\max_{i < j < k} \left\{
\min \left\{ \bigg| 1-a_{ij}a_{jk}a_{ki}\bigg| , \bigg| 1-\frac{1}{a_{ij}a_{jk}a_{ki}} \bigg| \right\} \right\}.
\end{equation}
\end{definition}

\begin{proposition}
The inconsistency index $K$ is upper bounded, i.e. the pairwise comparison matrix $\mathbf{A}^{\star}$ as in (\ref{eq:aggregation}) satisfies
\[
K(\mathbf{A}^{\star}) \leq  \max \{ K(\mathbf{A}_{1}),\ldots, K(\mathbf{A}_{m}) \}  ~~~~\forall \mathbf{A}_{1},\ldots,\mathbf{A}_{m} \in \mathcal{A},~~\boldsymbol{\lambda} \in \mathcal{L}_{m}.
\]
\end{proposition}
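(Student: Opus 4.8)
The plan is to reduce the problem to a single transitivity and then exploit a convenient closed form of the per-triad term. First I would fix a triple $(i,j,k)$ with $i<j<k$ and, as in the previous proofs, set $x_{h}:=a_{ij}^{(h)}a_{jk}^{(h)}a_{ki}^{(h)}$ for each decision maker $h$. Since the aggregation (\ref{eq:aggregation}) is multiplicative, the corresponding quantity for $\mathbf{A}^{\star}$ is $x^{\star}=\prod_{h=1}^{m}x_{h}^{\lambda_{h}}$, hence $\ln x^{\star}=\sum_{h=1}^{m}\lambda_{h}\ln x_{h}$. Writing $g(x):=\min\{|1-x|,|1-1/x|\}$ for $x>0$, the definition (\ref{eq:K}) reads $K(\mathbf{A})=\max_{i<j<k} g(a_{ij}a_{jk}a_{ki})$, so it suffices to control $g(x^{\star})$ triad by triad.

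The key observation is that $g$ admits the closed form
\[
g(x)=1-\min\left\{x,\frac{1}{x}\right\}=1-e^{-|\ln x|}\qquad\forall x>0.
\]
Indeed $g(x)=g(1/x)$, and for $x\geq 1$ the inequality $2\leq x+1/x$ gives $|1-1/x|=1-1/x\leq x-1=|1-x|$, so $g(x)=1-1/x$; the case $x\leq 1$ follows by the symmetry $x\mapsto 1/x$. In particular $g$ is a nondecreasing function of $|\ln x|$.

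Next I would estimate $|\ln x^{\star}|$. Because $\lambda_{h}\geq 0$ and $\sum_{h}\lambda_{h}=1$, the triangle inequality yields
\[
|\ln x^{\star}|=\left|\sum_{h=1}^{m}\lambda_{h}\ln x_{h}\right|\leq \sum_{h=1}^{m}\lambda_{h}|\ln x_{h}|\leq \max_{1\leq h\leq m}|\ln x_{h}|.
\]
Combining this with the monotonicity of $u\mapsto 1-e^{-u}$ on $[0,\infty)$ and the closed form of $g$, for the fixed triad one obtains
\[
g(x^{\star})=1-e^{-|\ln x^{\star}|}\leq 1-e^{-\max_{h}|\ln x_{h}|}=\max_{1\leq h\leq m}\left(1-e^{-|\ln x_{h}|}\right)=\max_{1\leq h\leq m} g(x_{h}).
\]

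Finally I would lift this to the whole matrix: for every triple $(i,j,k)$,
\[
g\left(a_{ij}^{\star}a_{jk}^{\star}a_{ki}^{\star}\right)\leq \max_{h} g\left(a_{ij}^{(h)}a_{jk}^{(h)}a_{ki}^{(h)}\right)\leq \max_{h} K(\mathbf{A}_{h}),
\]
and taking the maximum over all triples on the left-hand side gives $K(\mathbf{A}^{\star})\leq\max\{K(\mathbf{A}_{1}),\ldots,K(\mathbf{A}_{m})\}$. I do not expect a genuine obstacle; the only mildly delicate point is recognizing the rewriting $g(x)=1-e^{-|\ln x|}$, after which the claim reduces to the triangle inequality for the convex combination $\sum_{h}\lambda_{h}\ln x_{h}$ together with the monotonicity of $g$ in $|\ln x|$. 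Note that this argument only yields upper boundedness: unlike the previous indices, $K$ is \emph{not} strongly upper bounded, so no arithmetic-mean refinement is claimed.
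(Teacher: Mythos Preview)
Your proof is correct and reaches the same conclusion as the paper, but by a slightly different and more explicit route. Both arguments reduce to a single triad, set $x_{h}=a_{ij}^{(h)}a_{jk}^{(h)}a_{ki}^{(h)}$, and exploit that the per-triad function $f(x)=\min\{|1-x|,|1-1/x|\}$ depends only on how far $x$ is from $1$. The paper argues by contradiction: observing that $f$ is decreasing on $]0,1[$ and increasing on $]1,\infty[$, a violation of the inequality would force the weighted geometric mean $\prod_{h}x_{h}^{\lambda_{h}}$ to fall outside $[\min_{h}x_{h},\max_{h}x_{h}]$, contradicting the averaging property. You instead derive the closed form $f(x)=1-e^{-|\ln x|}$ and turn the claim into the elementary estimate $|\sum_{h}\lambda_{h}\ln x_{h}|\leq\max_{h}|\ln x_{h}|$ followed by the monotonicity of $u\mapsto 1-e^{-u}$. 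Your version is a bit more transparent, since the closed form makes the monotonicity in $|\ln x|$ and the symmetry $f(x)=f(1/x)$ explicit rather than implicit; the paper's version, on the other hand, needs only the qualitative shape of $f$ and the internality of the geometric mean. The final lifting to the $\max$ over all triples is identical in both proofs.
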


\begin{proof}
Here again we consider the case with only one transitivity $(i,j,k)$. Then we can start by writing
\begin{equation}
\label{eq:proofK}
\begin{split}
&\min \left\{ \bigg| 1- \prod_{h=1}^{m} {a_{ij}^{(h)}}^{\lambda_{h}}  \prod_{h=1}^{m}{a_{jk}^{(h)}}^{\lambda_{h}}  \prod_{h=1}^{m}{a_{ki}^{(h)}}^{\lambda_{h}} \bigg|,
\bigg| 1- \frac{1}{\prod_{h=1}^{m} {a_{ij}^{(h)}}^{\lambda_{h}}  \prod_{h=1}^{m}{a_{jk}^{(h)}}^{\lambda_{h}}  \prod_{h=1}^{m}{a_{ki}^{(h)}}^{\lambda_{h}}} \bigg|
\right\}
\leq \\
&\max_{h} \min \left\{ \bigg|1-a_{ij}^{(h)} a_{jk}^{(h)} a_{ki}^{(h)}\bigg|, \bigg| 1-\frac{1}{a_{ij}^{(h)} a_{jk}^{(h)} a_{ki}^{(h)}} \bigg| \right\},
\end{split}
\end{equation}
now with $x_{h}:=a_{ij}^{(h)}a_{jk}^{(h)}a_{ki}^{(h)}$, we have
\begin{equation}
\min \left\{ \bigg| 1- \prod_{h=1}^{m} x_{h}^{\lambda_{h}} \bigg|,
\bigg| 1-   \frac{1}{\prod_{h=1}^{m} x_{h}^{\lambda_{h}}}  \bigg| \right\}
\leq
\max_{h} \min \left\{ \bigg|1- x_{h} \bigg|, \bigg| 1-\frac{1}{x_{h}} \bigg| \right\}.
\end{equation}
By using the following notational change, $f(x):=\min \left\{ \big|1- x \big|, \big| 1-\frac{1}{x} \big| \right\}$, it is possible to rewrite the previous inequality as
\begin{equation}
f \left( \prod_{h=1}^{m} x_{h}^{\lambda_{h}}\right)
\leq
\max  \left\{ f(x_{1}),\ldots,f(x_{m}) \right\}.
\end{equation}
Now we shall prove it by contradiction. Let us suppose that the proposition is wrong and that there exist $x_{1},\ldots,x_{m}$ and $\lambda_{1},\ldots,\lambda_{m}$ such that
\[
f \left( \prod_{h=1}^{m} x_{h}^{\lambda_{h}}\right)
>
\max  \left\{ f(x_{1}),\ldots,f(x_{m}) \right\}.
\]
Then, by examining the function $f(x)$ one can see that it is decreasing for $x\in ]0,1[$ and increasing for $x\in ]1,\infty[$ with a minimum in $1$ and then deduce that either $\prod_{h=1}^{m} {x_{h}}^{\lambda_{h}}>\max\{ x_{1},\ldots,x_{m} \}$ or $\prod_{h=1}^{m} {x_{h}}^{\lambda_{h}}<\min\{ x_{1},\ldots,x_{m} \}$. However, this cannot be possible, as we know that the weighted geometric mean is an averaging aggregation function, i.e. $\min \{ x_{1},\ldots,x_{m} \} \leq \prod_{h=1}^{m}x_{h}^{\lambda_{h}} \leq \max \{ x_{1},\ldots,x_{m} \}$. Having proved (\ref{eq:proofK}) for each triplet $(i, j, k)$, the original inequality in the proposition holds in particular for the maximum, since the maximum of quasiconvex functions is quasiconvex.
\end{proof}

\begin{remark}
Although index $K$ is upper bounded, it is  not strongly upper
bounded.  Namely, given a pairwise comparison matrix
$\mathbf{A}^{\star}$ as in (\ref{eq:aggregation}) the following
property does \emph{not} hold
\begin{equation}
\label{eq:K_upper_weak} K(\mathbf{A}^{\star}) \leq  \sum_{h=1}^{m}
\lambda_{h}  K (\mathbf{A}_{h}) ~~~~\forall
\mathbf{A}_{1},\ldots,\mathbf{A}_{m} \in
\mathcal{A},~~\boldsymbol{\lambda} \in \mathcal{L}_{m} .
\end{equation}
In order to prove that (\ref{eq:K_upper_weak}) does not hold in
general, let us present the following counterexample. Consider two
decision makers and the corresponding two pairwise comparison
matrices
\[
\mathbf{A}_{1}=
\begin{pmatrix}
1 & 1/9 & 9 \\
9 & 1   & 1/9 \\
1/9 & 9   & 1
\end{pmatrix}
\hspace{0.7cm}
\mathbf{A}_{2}=
\begin{pmatrix}
1 & 3 & 9 \\
1/3 & 1   & 3 \\
1/9 & 1/3   & 1
\end{pmatrix}.
\]

\noindent As in Example \ref{Ex:1}, the preferences are aggregated
according to (\ref{eq:aggregation}) obtaining matrix $
\mathbf{A}^{\star} (\lambda) $. Then, the index
$K(\mathbf{A}^{\star} (\lambda))$
 is computed for $\lambda \in [0,1]$.
As illustrated in Figure \ref{fig:Ex2}, $K$ is not strongly upper
bounded. For instance, taking $\lambda = 0.5$ ,

\[
K(\mathbf{A}^{\star}(0.5)) \approx 0.96 > 0.5 K(\mathbf{A}_{1})+ 0.5
K(\mathbf{A}_{2}) \approx 0.5.
\]

\noindent Thus, in this example, the inconsistency of the aggregated
preferences is greater than the average inconsistency of the
decision makers, as index $K$ is not strongly upper bounded.

\begin{figure}[htbp]
    \centering
        \includegraphics[width=0.50\textwidth]{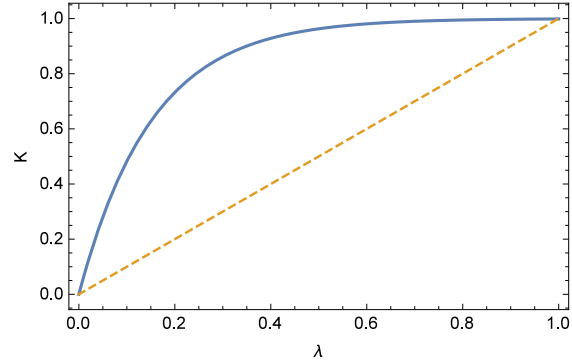}
    \caption{Index $K$ is upper bounded but not strongly upper bounded.}
    \label{fig:Ex2}
\end{figure}

%Then it can be calculated that, with $\lambda_{1}=\lambda_{2}=0.5$
%\[
%K(\mathbf{A}^{\star}) \approx 0.96 > 0.5 K(\mathbf{A}_{1})+ 0.5 K(\mathbf{A}_{2}) \approx 0.5
%\]
%which contradicts (\ref{eq:K_upper_weak}).
\end{remark}

\section{Discussion}
\label{sec:discussion}

In the previous section we analyzed some boundary properties and whether they are respected or not by some well-known inconsistency indices. To do so, we also related these new properties with the axioms A1--A5 introduced by \citet{BrunelliFedrizziJORS} and discussed some implications. One result is that the inconsistency of a combination of pairwise comparison matrices cannot be lower bounded by the value of inconsistency of the least inconsistent matrix. We have in fact shown that an inconsistency index satisfying A1 is never lower bounded with respect to the aggregation of the preferences of different decision makers. Moreover, results show that, more often than not, inconsistency indices are, instead, upper bounded. Table \ref{tab:summary} summarizes the results obtained in the previous section.

\begin{table}[hbt]
\begin{center}
\begin{tabular}{c|c|c|c}
Index / Property     & ~~LB~~ & ~~UB~~ & ~~S-UB~~   \\
    \hline
$CI$                        & \xmark   & \underline{\cmark}   & \cmark    \\
$RE$                        & \xmark   & \xmark   & \xmark    \\
$CI^{*}$, $c_{3}$ & \xmark   & \cmark   & \cmark    \\
$GCI$           & \xmark   & \underline{\cmark}   & \cmark    \\
$HCI$                       & \xmark   & \xmark   & \xmark    \\
$GW$                        & \xmark   & \xmark   & \xmark    \\
$I_{CD}$                        & \xmark   & \cmark   & \cmark    \\
$K$                         & \xmark   & \cmark   & \xmark    \\
$NI_{n}^{\sigma}$ & \xmark   & \xmark   & \xmark    \\
    \hline
\end{tabular}
\caption{Summary table. LB = lower bounded, UB = upper bounded, S-UB = strongly upper bounded, \cmark = the property is satisfied, \xmark = the property is not satisfied. The underlined results were already known in the literature.} \label{tab:summary}
\end{center}
\end{table}

Comparing the results summarized in Table \ref{tab:summary} with the satisfaction of the axioms A1--A5, it appears that all the indices which satisfy the axioms A1--A5 are also upper bounded. From this, it is only natural to hypothesize that the axioms A1--A5 mathematically imply the satisfaction of the upper boundary condition. However, this is not true and can be formalized in the following proposition.

\begin{proposition}
\label{prop:A5UB}
If an inconsistency index satisfies the axiomatic system A1--A5, then it is not necessarily upper bounded.
\end{proposition}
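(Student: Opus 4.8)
The plan is to exhibit an explicit inconsistency index that satisfies all five axioms A1--A5 yet violates the upper boundary property (\ref{eq:upper_bounded}). Motivated by the fact---already used implicitly in the proofs for $GCI$ and $CI^{*}$---that $\mathbf{A}$ is consistent if and only if $a_{ij}a_{jk}a_{ki}=1$ for every triple $i<j<k$, I would work with the \emph{triangle log-defects} $d_{ijk}(\mathbf{A}):=\ln\!\left(a_{ij}a_{jk}a_{ki}\right)$ and set
\[
I(\mathbf{A}):=\sum_{i<j<k}^{n}\bigl|\,d_{ijk}(\mathbf{A})\,\bigr|^{1/2}.
\]
The building block $t\mapsto|t|^{1/2}$ is even and increasing in $|t|$, which is exactly what A3 and A4 need along the curves they involve; but, since the $d_{ijk}$ are linear in the logarithms of the entries, upper boundedness of $I$ is equivalent to quasiconvexity of $I$ in log-coordinates, and a sum of several sub-linear terms of this type fails to be quasiconvex in several variables. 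That mismatch is what will break the upper bound.

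First I would verify A1--A5. A1 holds with $\nu=0$ by the consistency characterization just recalled, and $\nu$ is clearly unique. A2 holds because a permutation of the alternatives merely permutes the set of triangles, at worst reversing a cyclic orientation, which changes $d_{ijk}$ only by a sign and hence leaves $|d_{ijk}|^{1/2}$ unchanged. A3 is immediate: replacing each $a_{ij}$ by $a_{ij}^{b}$ multiplies every $d_{ijk}$ by $b$, so $I(\mathbf{A}(b))=b^{1/2}I(\mathbf{A})\ge I(\mathbf{A})$ when $b>1$. For A4, starting from a consistent $\mathbf{A}$ and replacing $a_{pq}$ by $a_{pq}^{\delta}$ affects exactly the $n-2$ triangles containing both $p$ and $q$; for each such triangle $a_{pq}a_{qk}a_{kp}=1$ gives $|d_{pqk}|=|\delta-1|\,|\ln a_{pq}|$, while every other defect stays $0$, so $I(\mathbf{A}(\delta))=(n-2)\,|\ln a_{pq}|^{1/2}\,|\delta-1|^{1/2}$, a strictly increasing function of $|\delta-1|$; both implications in (\ref{monotonicity}) follow. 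A5 is clear since $t\mapsto|t|^{1/2}$ is continuous and the $d_{ijk}$ depend continuously on the entries of $\mathbf{A}$.

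Then I would produce the counterexample to (\ref{eq:upper_bounded}) with $n=4$, $m=2$ and $\lambda_{1}=\lambda_{2}=1/2$. Ordering the four triangle-defects as $(d_{123},d_{124},d_{134},d_{234})$, one checks the single linear relation $d_{123}+d_{134}=d_{124}+d_{234}$, so both vectors $(1,1,0,0)$ and $(0,0,1,1)$ are attainable: concretely take $\mathbf{A}_{1}$ with $a_{23}=a_{24}=e$ and all other above-diagonal entries equal to $1$, and $\mathbf{A}_{2}$ with $a_{34}=e$ and all other above-diagonal entries equal to $1$; these have defect vectors $(1,1,0,0)$ and $(0,0,1,1)$ respectively, hence $I(\mathbf{A}_{1})=I(\mathbf{A}_{2})=2$. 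Because the defects are linear in the log-entries, the aggregated matrix $\mathbf{A}^{\star}$ has defect vector $(1/2,1/2,1/2,1/2)$, so $I(\mathbf{A}^{\star})=4\cdot(1/2)^{1/2}=2\sqrt{2}>2=\max\{I(\mathbf{A}_{1}),I(\mathbf{A}_{2})\}$, contradicting (\ref{eq:upper_bounded}). Thus $I$ satisfies A1--A5 but is not upper bounded, which proves the proposition.

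I do not expect a genuine obstacle once the index is chosen well; the conceptual step is recognising that ``upper bounded'' means ``quasiconvex in log-coordinates'' and then picking an aggregate that is monotone/quasiconvex in one variable (to survive A3 and A4) but not quasiconvex in several (to fail the boundary property). The most delicate routine check is A4, where one must confirm that only the triangles through the modified edge are affected and that the resulting expression is monotone on each side of $\delta=1$; the remaining verifications and the arithmetic of the counterexample are straightforward.
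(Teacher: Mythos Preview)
Your proof is correct and follows essentially the same strategy as the paper: construct a counterexample index as a function of the triangle log-defects $d_{ijk}=\ln(a_{ij}a_{jk}a_{ki})$, verify A1--A5 term-by-term, and break upper boundedness with two $4\times 4$ matrices whose defect vectors are complementary (supported on disjoint pairs of triangles). The only difference is the choice of building block: you take $I(\mathbf{A})=\sum_{i<j<k}|d_{ijk}|^{1/2}$ and exploit the strict subadditivity of $\sqrt{\cdot}$, whereas the paper uses $I_{M}(\mathbf{A})=\min_{i<j<k}|d_{ijk}|+\sum_{i<j<k}|d_{ijk}|$ and exploits the fact that the $\min$ term jumps from $0$ on each individual matrix to a positive value on the aggregate; the two $4\times 4$ counterexample matrices and the underlying mechanism are otherwise the same.
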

\noindent To prove the proposition, it is sufficient to show a counterexample; that is, an index satisfying A1--A5 which is not upper bounded. The following index is an example:
\begin{equation}
\label{eq:I_Mikko}
I_{M}(\mathbf{A})=\min_{i<j<k}\left| \ln \frac{a_{ij}a_{jk}}{a_{ik}}\right| + \sum_{i=1}^{n-2} \sum_{j=i+1}^{n-1} \sum_{k=j+1}^{n} \left| \ln \frac{a_{ij}a_{jk}}{a_{ik}}\right| .
\end{equation}
It is easy to check that $I_{M}$ satisfies the five axioms but, when applied to the following two matrices
\[
\mathbf{A}=
\begin{pmatrix}
1    & 4 & 1 & 1 \\
1/4  & 1 & 1 & 1 \\
1  & 1 & 1 & 1 \\
1  & 1 & 1 & 1 \\
\end{pmatrix},
\hspace{0.7cm}
\mathbf{B}=
\begin{pmatrix}
1    & 1 & 1 & 1 \\
1  & 1 & 1 & 1 \\
1  & 1 & 1 & 4 \\
1  & 1 & 1/4 & 1 \\
\end{pmatrix},
\]
we obtain $I_{M}(\mathbf{A})=I_{M}(\mathbf{B}) \approx 2.77$, but if
we  consider that $I_{M}(a_{ij}^{0.5} b_{ij}^{0.5})\approx 3.46574$,
it appears that the proposition is correct. A graphical
representation of this case is presented in Figure \ref{fig:Ex3}.
\begin{figure}[htbp]
    \centering
        \includegraphics[width=0.50\textwidth]{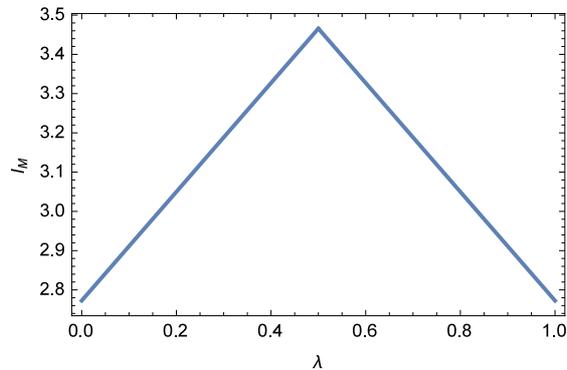}
    \caption{Index $I_{M}$ satisfies A1--A5 but is not upper bounded.}
    \label{fig:Ex3}
\end{figure}

Another matter, this time of a more qualitative debate, relates with
the meaning  attached to the consistency of group preferences. If
inconsistency indices are used to estimate the rationality of
decision makers and detect those which are too irrational, then the
meaning of the inconsistency of the aggregated preferences vanishes.
This is due to the fact that the link between rationality of a
decision maker and consistency of preferences, which is generally
assumed in the practice of the AHP, holds only for the initial
preferences of a decision maker and breaks down as the preferences
are aggregated, as already discussed in Subsection \ref{sub:lower}.
We believe that we can learn a lot from the boundary properties: for
instance we know, a priori, that when evaluated by an upper bounded
inconsistency index, if the pairwise preferences of the most
inconsistent decision maker move towards a consensual solution, at
least at the beginning, they become more consistent. Nevertheless,
we refrain---and we advise other researchers to do the same---from
considering the inconsistency of the group preferences as a global
measure of the inconsistency of various decision makers.

%\begin{example}
%Possiamo fornire un esempio numerico.
%\end{example}

\section{Conclusions}
\label{sec:conclusions}

From our investigation on the connection between consensus and
inconsistency, we can summarize some relevant findings. First, the
effect of the preference aggregation among different decision makers
on the inconsistency of the group preferences depends crucially on
the inconsistency index that is used. More precisely, we proved that
a certain number of already known indices satisfy the upper boundary
property and/or the strong upper boundary property. As a
consequence, for some indices the inconsistency of the aggregated
preferences is always lower than a weighted mean of the
inconsistencies of the original preference relations. This effect
can be synthesized by saying that preference aggregation is
consistency improving. On the other hand, if the inconsistency is
evaluated by means of other indices, then the opposite result can be
obtained. Therefore, as pointed out in \citet{BrunelliFedrizziJORS},
a suitable choice of an inconsistency index is a crucial phase in
decision-making processes, since the use of different methods for
measuring consistency can lead to different conclusions and can
affect the decision outcome in practical applications.
Interestingly, some more general results have been derived from the
axiomatic system proposed by \citet{BrunelliFedrizziJORS}. If it is
true that one of the merits of an axiomatic system is its fertility,
i.e. the capacity to produce propositions, then Propositions
\ref{prop:1} and \ref{prop:2} and Corollaries \ref{cor:1} and
\ref{cor:2} are positive signs in this direction and seem to
indicate that the axiomatic system can be used to derive interesting
results. Hence, we believe that future research on inconsistency of
pairwise comparisons (i) can build upon the axiomatic system A1--A5
and (ii) further investigate the relation between consensus and
consistency, perhaps clarifying the open issues exposed in the
previous section.

\section*{Acknowledgements}
We thank the three anonymous reviewers for their constructive
comments, and S\'{a}ndor B\'{o}zoki for having read a draft of this
manuscript. We are also grateful to Mikko Harju from the Systems
Analysis Laboratory, Aalto University, who provided the example of
function $I_{M}$ in (\ref{eq:I_Mikko}). The research of the first
author is supported by the Academy of Finland.

\end{document}